\DeclarePairedDelimiter\ceil{\lceil}{\rceil}
\newcommand{\reals}{\mathbb{R}}
\newcommand{\naturals}{\mathbb{N}}
\definecolor{darkgreen}{rgb}{0,0.5,0}
\definecolor{purple}{rgb}{1,0,1}
\newcommand{\kibitz}[2]{\ifnum\Comments=1\textcolor{#1}{#2}\fi}
\newcommand{\Acal}{\mathcal{A}}
\newcommand{\Bcal}{\mathcal{B}}
\newcommand{\Dcal}{\mathcal{D}}
\newcommand{\Fcal}{\mathcal{F}}
\newcommand{\Ecal}{\mathcal{E}}
\newcommand{\Lcal}{\mathcal{L}}
\newcommand{\Tcal}{\mathcal{T}}
\newcommand{\Vcal}{\mathcal{V}}
\newcommand{\Wcal}{\mathcal{W}}
\newcommand{\Xcal}{\mathcal{X}}
\newcommand{\Ycal}{\mathcal{Y}}
\DeclareMathOperator*{\argmin}{arg\,min}
\DeclareMathOperator*{\expect}{\operatorname{\mathbb{E}}}
\newcommand{\prob}{\mathbb{P}}
\newcommand{\trace}{\operatorname{tr}}
\newcommand{\indicator}{\mathbbm{1}}
\newcommand{\norm}[1]{\left\lVert#1\right\rVert}
\newcommand{\inner}[2]{\left\langle #1, #2 \right\rangle}
\title[Online Operator Learning]{Online Infinite-Dimensional Regression: Learning Linear Operators}
\thanks{Equal Contribution}  \Email{vkraman@umich.edu}\\
\begin{document}

\maketitle

\begin{abstract}%
We consider the problem of learning linear operators under squared loss between two infinite-dimensional Hilbert spaces in the online setting. We show that the class of linear operators with uniformly bounded $p$-Schatten norm is online learnable for any $p \in [1, \infty)$. On the other hand, we prove an impossibility result by showing that the class of uniformly bounded linear operators with respect to the operator norm is \textit{not} online learnable. Moreover, we show a separation between sequential uniform convergence and online learnability by identifying a class of bounded linear operators that is online learnable but uniform convergence does not hold. Finally, we prove that the impossibility result and the separation between uniform convergence and learnability also hold in the batch setting. 
\end{abstract}

\begin{keywords}%
 Online Learnability, Linear Operators, Regression
\end{keywords}

\section{Introduction}
Learning operators between infinite-dimensional spaces is of fundamental importance in many scientific and engineering applications. For instance, the classical inverse problem is often modeled as learning an inverse mapping from a function space of observed data to the function space of underlying latent parameters, both of which are infinite-dimensional spaces \citep{kirsch2011introduction, tarantola2005inverse}. Such inverse problems have found widespread applicability in domains ranging from image processing, X-ray tomography, seismic inversion, and so forth \citep{neto2012introduction, uhlmann2003inside}. 
In addition, the solution to a partial differential equation is an operator from a space of functions specifying boundary conditions to the space of solution functions \citep{kovachki2021neural, li2020fourier}. Moreover, many of the traditional learning settings such as multi-task learning, matrix completion, and collaborative filtering can be modeled as learning operators between infinite-dimensional spaces \citep{abernethy2009new}.  Finally, many modern supervised learning applications involve working with datasets, where both the features and labels lie in high-dimensional spaces \citep{deng2009imagenet, santhanam2017generalized}. Thus, it is desirable to construct learning algorithms whose guarantees do not scale with the ambient dimensions of the problem.  

Most of the existing work in operator learning assumes some stochastic model for the data, which can be unrealistic in many applications. For instance, the majority of applications of operator learning are in the scientific domain where the data often comes from experiments \citep{lin2021operator}. Since experiments are costly, the data usually arrives sequentially and  with a strong temporal dependence that may not be adequately captured by a stochastic model. Additionally, given the high-dimensional nature of the data, one typically uses pre-processing techniques like PCA to project the data onto a low-dimensional space \citep{bhattacharya2021model, lanthaler2023operator}. Even if the original data has some stochastic nature, the preprocessing step introduces non-trivial dependencies in the observations that may be difficult to model. Accordingly, it is desirable to construct learning algorithms that can handle \textit{arbitrary} dependencies in the data. In fact, for continuous problems such as scalar-valued regression, one can often obtain guarantees similar to that of i.i.d. setting without making any assumptions on the data \citep{rakhlin2014online}.

In this paper, we study linear operator learning between two Hilbert spaces $\Vcal$ and $\Wcal$ in the \textit{adversarial online setting}, where one makes no assumptions on the data generating process \citep{cesa2006prediction}. In this model, a potentially adversarial nature plays a sequential game with the learner over $T$ rounds.  In each round $t \in [T]$, nature selects a pair of vectors $(x_t, y_t) \in \mathcal{\Vcal} \times \Wcal$ and reveals $x_t$ to the learner. The learner then makes a prediction $\hat{y}_t \in \Wcal.$ Finally, the adversary reveals the target $y_t$, and the learner suffers the loss $\norm{\hat{y}_t - y_t}^2_{\Wcal}$. A linear operator class $\mathcal{F} \subset \Wcal^{\Vcal}$ is online learnable if there exists an online learning algorithm such that for any sequence of labeled examples, the difference in cumulative loss between its predictions and the predictions of the best-fixed operator in $\mathcal{F}$ is small. In this work, we study the online learnability of linear operators and make the following contributions: 
\begin{itemize}
    \item[(1)] We show that the class of linear operators with uniformly bounded $p$-Schatten norm is online learnable with regret $O(T^{\, \max\left\{\frac{1}{2}, 1- \frac{1}{p} \right\}})$. We also provide a lower bound of $\Omega( T^{1-\frac{1}{p}})$, which matches the upperbound  for $p \geq 2$. 
    
    \item[(2)] We prove that the class of linear operators with uniformly bounded operator norm is not online learnable. Furthermore, we show that this impossibility result also holds in the batch setting.

    \item[(3)] Recently, there is a growing interest in understanding when uniform convergence and learnability are not equivalent \citep{montasser2019vc, hanneke2023multiclass}. Along this direction, we give a subset of bounded linear operators for which online learnability and uniform convergence are not equivalent. 
    
\end{itemize}

To make contribution (1), we upperbound the sequential Rademacher complexity of the loss class to show that sequential uniform convergence holds for the $p$-Schatten class for $p \in [1, \infty)$. For our hardness result stated in contribution (2), we construct a class with uniformly bounded operator norm that is not online learnable. 
Our construction in contribution (3) is inspired by and generalizes the example of \citet[Page 22]{natarajan1989some}, which shows a gap between uniform convergence and PAC learnability for multiclass classification. The argument showing that uniform convergence does not hold is a simple adaptation of the existing proof \citep{natarajan1989some}. However, since our loss is real-valued, showing that the class is learnable requires some novel algorithmic ideas, which can be of independent interest. 

\subsection{Related Works}
Regression between two infinite-dimensional function spaces is a classical statistical problem often studied in functional data analysis (FDA) \citep{wang2016functional, ferraty2006nonparametric}. In FDA, one typically considers $\Vcal$ and $\Wcal$ to be $L^2[0,1]$, the space of square-integrable functions, and the hypothesis class is usually a class of kernel integral operators.  We discuss the implication of our results to learning kernel integral operators in Section \ref{examples}. Recently, \cite{de2023convergence, nelsen2021random, mollenhauer2022learning} study learning more general classes of linear operators. However, all of these works are in the i.i.d. setting and assume a data-generating process.
Additionally, there is a line of work that uses deep neural networks to learn neural operators between function spaces \citep{kovachki2021neural, li2020fourier}. Unfortunately, there are no known learning guarantees for these neural operators. Closer to the spirit of our work is that of \citet{tabaghi2019learning}, who consider the agnostic PAC learnability of $p$-Schatten operators. They show that $p$-Schatten classes are agnostic PAC learnable. In this work, we complement their results by showing that $p$-Schatten classes are also {\em online} learnable. Going beyond the i.i.d. setting, there is a line of work that focuses on learning specific classes of operators from time series data \citep{brunton2016discovering, klus2020data}.

\section{Preliminaries}

\subsection{ Hilbert Space Basics}
Let $\mathcal{V}$ and $\mathcal{W}$ be real, separable, and infinite-dimensional Hilbert spaces. Recall that a Hilbert space is separable if it admits a countable orthonormal basis.  Throughout the paper, we let $\{e_n\}_{n=1}^{\infty}$ and $\{\psi_n\}_{n=1}^{\infty}$ denote a set of orthonormal basis for $\Vcal$ and $\Wcal$ respectively.  Then, any element $v \in \Vcal$ and $w \in \Wcal$ can be written as $v = \sum_{n=1}^{\infty} \beta_n e_n  \text{ and }  w = \sum_{n=1}^{\infty} \alpha_n \psi_n$
for  sequences $\{\beta_n\}_{n \in \naturals}$  and $ \{\alpha_n\}_{n=1}^{\infty}$ that are $\ell_2$ summable. 

Consider $w_1, w_2 \in \Wcal$ such that $w_1 = \sum_{n=1}^{\infty}\alpha_{n,1}\, \psi_n$ and $\sum_{n=1}^{\infty} \alpha_{n,2}\, \psi_n$. Then, the inner product between $w_1$ and $w_2$ is defined as $\inner{w_1}{w_2}_{\Wcal} := \sum_{n=1}^{\infty}\alpha_{n,1}\alpha_{n,2},$ and
it induces the norm $\norm{w_1}_{\Wcal}: = \sqrt{\inner{w_1}{w_1}}_{\Wcal} = \sqrt{\sum_{n=1}^{\infty} \alpha_{n,1}^2}$. One can equivalently define $\inner{\cdot}{\cdot}_{\Vcal}$ and $\norm{\cdot}_{\Vcal}$ to be the inner-product and the induced norm in the Hilbert space $\Vcal$. When the context is clear, we drop the subscript and simply write $\inner{\cdot}{\cdot}$ and $\norm{\cdot}$.

A linear operator $f: \Vcal \to \Wcal$ is a mapping that preserves the linear structure of the input. That is, $f(c_1 v_1 + c_2 v_2) = c_1 f(v_1) + c_2 f(v_2)$ for any $c_1, c_2 \in \reals$ and $v_1, v_2 \in \Vcal$. Let $\Lcal(\Vcal, \Wcal)$ denote the set of all linear operators from $\Vcal $ to $\Wcal$. A linear operator $f: \Vcal \to \Wcal$ is bounded if there exists a constant $c > 0$ such that $\norm{f(v)} \leq c \norm{v}$ for all $v \in \Vcal$. The quantity $\norm{f}_{\text{op}} := \inf \{c \geq 0 \, : \norm{f(v)} \leq c \norm{v}, \forall v \in \Vcal\} $ is called the operator norm of $f$. The operator norm induces the set of bounded linear operators, $\Bcal(\Vcal, \Wcal) = \{f \in \Lcal(\Vcal, \Wcal) \, \mid \norm{f}_{\text{op}} < \infty\},$
which is a Banach space with $\norm{\cdot}_{\text{op}}$ as the norm.

 For an operator $f \in \Lcal(\Vcal, \Wcal)$, let $f^{\star} : \Wcal \to \Vcal$ denote the adjoint of $f$. We can use $f$ and $f^{\star}$ to define a self-adjoint, non-negative operator $f^{\star}f: \Vcal \to \Vcal$. Moreover, the absolute value operator is defined as $|f| :=(f^{\star}f)^{\frac{1}{2}} $, which is the unique non-negative operator such that $|f| \circ |f| = f^{\star}f$. Given any operator $g: \Vcal \to \Vcal$, the trace of $g$ is defined as 
$\trace(g) = \sum_{n=1}^{\infty} \inner{g(e_n)}{e_n},$
where $\{e_n\}_{n=1}^{\infty}$ is any orthonormal basis of $ \Vcal$. The notion of trace and absolute value allows us to define the $p$-Schatten norm of $f$,
\[\norm{f}_p = \Big( \trace(|f|^{p})\Big)^{\frac{1}{p}},\]
 for all $p \in [1, \infty)$. Accordingly, we can define the $p$-Schatten class as 
 \[S_p(\Vcal, \Wcal ) = \{f \in \Lcal(\Vcal, \Wcal) \mid f \text{ is compact and }\norm{f}_p < \infty\}. \]

A linear operator $f:\Vcal \to \Wcal$ is compact if the closure of the set $\{f(v) \mid v \in \Vcal, \norm{v} \leq 1\}$ is compact. For a compact linear operator $f:\Vcal \to \Wcal$, there exists a sequence of orthonormal basis $\{\phi_{n}\}_{n=1}^{\infty} \subset \Vcal$ and $\{\varphi_{n}\}_{n=1}^{\infty} \subset \Wcal$ such that $f = \sum_{n=1}^{\infty} s_n(f)\,\, \varphi_n \otimes \phi_n$, where $s_n(f) \downarrow 0$ and $\varphi_n \otimes \phi_n$ denote the tensor product between $\varphi_n$ and $\phi_n$. This is the singular value decomposition of $f$ and the sequence $\{s_n(f)\}_{n=1}^{\infty}$ are the singular values of $f$. For $p \in [1, \infty)$, the $p$-Schatten norm of a compact operator is equal to the $\ell_p$ norm of the sequence $\{s_n(f)\}_{n \geq 1}$,
\begin{equation*}
   \norm{f}_p = \left(\sum_{n=1}^{\infty}s_n(f)^{p}\right)^{\frac{1}{p}}.
\end{equation*}
On the other hand, for a compact operator $f$, the $\ell_{\infty}$ norm of its singular values is equal to its operator norm, $\norm{f}_{\text{op}} = \norm{f}_{\infty} = \sup_{n \geq 1 }|s_n(f)|.$
Accordingly, for compact operators, the operator norm is referred to as $\infty$-Schatten norm, which induces the class
 \[S_{\infty}(\Vcal, \Wcal ) = \{f \in \Lcal(\Vcal, \Wcal) \mid f \text{ is compact and } \norm{f}_{\infty} < \infty\}. \]
Therefore, $S_{\infty}(\Vcal, \Wcal) \subset \Bcal(\Vcal, \Wcal)$. For a comprehensive treatment of the theory of Hilbert spaces and linear operators, we refer the reader to \cite{conway1990course} and \cite{weidmann2012linear}.

\subsection{Online Learning}\label{sec:ol}
Let $\Xcal \subseteq \Vcal$ denote the instance space, $\Ycal \subseteq \Wcal$ denote the target space, and $\mathcal{F} \subseteq \Lcal(\Vcal, \Wcal)$ denote the hypothesis class. In online linear operator learning, a potentially adversarial nature plays a sequential game with the learner over $T$ rounds. In each round $t \in [T]$, the nature selects a labeled instance $(x_t, y_t) \in \mathcal{X} \times \Ycal$ and reveals $x_t$ to the learner. The learner then uses all past examples $\{(x_i, y_i)\}_{i=1}^{t-1}$ and the newly revealed instance $x_t$ to make a prediction $\hat{y}_t \in \Ycal.$ Finally, the adversary reveals the target $y_t$, and the learner suffers the loss $\norm{\hat{y}_t - y_t}^2_{\Wcal}$.  Given   $\mathcal{F}$, the goal of the learner is to make predictions such that its regret, defined as a difference between the cumulative loss of the learner and the best possible cumulative loss over operators in $ \mathcal{F}$, is small. 

\begin{definition}[Online Linear Operator Learnability]\label{OOL}
A linear operator class $\Fcal \subseteq \Lcal(\Vcal, \Wcal)$ is online learnable if there exists an algorithm $\mathcal{A}$ such that its expected regret is
$$\emph{\texttt{R}}_{\Acal}(T, \Fcal) := \sup_{(x_1, y_1), \ldots, (x_T, y_T)}\, \mathbb{E}\left[\sum_{t=1}^T \norm{\Acal(x_t) - y_t}^2 - \inf_{f \in \mathcal{F}}\sum_{t=1}^T  \norm{f(x_t) - y_t}^2\right]  $$
is a non-decreasing, sublinear function of $T$.
\end{definition}

Unlike when $\Vcal$ is finite-dimensional, the class $\Fcal = \Lcal(\Vcal, \Wcal)$ is not online learnable when $\Vcal$ is infinite-dimensional (see Section \ref{sec:lowerbounds}). Accordingly, we are interested in understanding for which subsets $\Fcal \subset \Lcal(\Vcal, \Wcal)$ is online learning possible. Beyond online learnability, we are also interested in understanding when a probabilistic property called the sequential uniform convergence holds for the loss class $\{(x, y) \mapsto \norm{f(x)-y}^2 \, : \, f \in \Fcal\}.$

\begin{definition}[Sequential Uniform Convergence]\label{ouniform}
Let $\{(X_t, Y_t)\}_{t=1}^T$ be an arbitrary sequence of random variables defined over an appropriate probability space on $\Xcal \times \Ycal$, and $ \mathcal{C} = \{\mathcal{C}_t\}_{t=0}^{T-1}$  be an arbitrary filtration such that $(X_t, Y_t)$ is $\mathcal{C}_t$-measurable. Given a linear operator class $\Fcal \subseteq \Lcal(\Vcal, \Wcal)$, we say that sequential uniform convergence holds for a loss class $\{(x, y) \mapsto \norm{f(x)-y}^2 \, : \, f \in \Fcal\}$ if
\[\limsup_{T \to \infty}\,\, \sup_{\bf{P}}\,\,   \mathbb{E}\left[ \sup_{f \in \Fcal} \left|\frac{1}{T} \sum_{t=1}^T \left( \norm{f(X_t)-Y_t}^2 - \mathbb{E}[\norm{f(X_t)-Y_t}^2 \mid \mathcal{C}_{t-1}] \right)\right| \right] =0. \]
Here, the supremum is taken over all joint distributions $\bf{P}$ of $\{(X_t, Y_t)\}_{t=1}^T$.
\end{definition}

A general complexity measure called the sequential Rademacher complexity characterizes sequential uniform convergence \citep{rakhlin2015online, rakhlin2015sequential}. 

\begin{definition}[Sequential Rademacher Complexity]\label{sRad}
Let $\sigma = \{\sigma_i\}_{i=1}^T$ be a sequence of independent Rademacher random variables and $(x,y) = \{(x_t, y_t)\}_{t=1}^T$ be a sequence of functions $(x_t,y_t) : \{-1,1\}^{t-1} \to \Xcal \times \Ycal$. Then, the sequential Rademacher complexity of the loss class $ \{(v, w) \mapsto \norm{f(v)-w}^2 \, : \, f \in \Fcal\}$ is defined as 
\[\emph{\text{Rad}}_{T}( \Fcal) = \sup_{x,y} \,\expect\left[ \sup_{f \in \Fcal} \,\sum_{t=1}^T \sigma_t \norm{f(x_t(\sigma_{< t}))- y_t(\sigma_{<t})}^2\right],\]
where $\sigma_{< t} = (\sigma_1, \ldots, \sigma_{t-1}).$
\end{definition}
\noindent If there exists a $B > 0$ such that $\sup_{f, v, w} \norm{f(v)-w}^2 \leq B$, then Theorem 1 of \cite{rakhlin2015sequential} implies that the sequential uniform convergence holds for the loss class $ \{(v, w) \mapsto \norm{f(v)-w}^2 \, : \, f \in \Fcal\}$ if and only if $\text{Rad}_{T}(\Fcal) = o(T)$. Given this equivalence, in this work, we only rely on the sequential Rademacher complexity of $\Fcal$ to study its sequential uniform convergence property. 

\section{$p$-Schatten Operators are Online Learnable}\label{sec:pSchatten}
In this section, we show that every uniformly bounded subset of  $S_p(\Vcal, \Wcal)$ is online learnable. Despite not making any distributional assumptions, the rates in Theorem \ref{Sp} match the lowerbounds in the batch settig established in Section \ref{sec:PAC}. This complements the results by \cite{rakhlin2014online}, who show that the rates for scalar-valued regression with squared loss are similar for online and PAC learning. 


\begin{theorem}[Uniformly Bounded Subsets of $S_p(\Vcal, \Wcal)$ are Online Learnable]\label{Sp}
Fix $c> 0$. Let $\Xcal = \{v \in \Vcal \mid \norm{v} \leq 1\}$ denote the instance space, $\Ycal = \{w \in \Wcal \mid \norm{w} \leq c\}$ denote the target space, and  $\Fcal_p = \{f \in S_{p}(\Vcal, \Wcal) \, \mid \, \norm{f}_{p} \leq c\}$ be the hypothesis class for $p \in [1,\infty]$. Then, 
\[\inf_{\Acal} \, \emph{\texttt{R}}_{\Acal}(T, \Fcal_p) \leq\, \,2\,\emph{\text{Rad}}_T(\Fcal_p) \,\,\leq \,\, 6c^2\,T^{\,\max\left\{\frac{1}{2}, 1-\frac{1}{p} \right\}}. \] 
\end{theorem}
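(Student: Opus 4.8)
The plan is to reduce the statement to a bound on the sequential Rademacher complexity of the loss class and then estimate that complexity by ``opening up'' the squared loss. The first inequality, $\inf_{\Acal}\texttt{R}_{\Acal}(T,\Fcal_p)\le 2\,\text{Rad}_T(\Fcal_p)$, is the general minimax bound for online supervised learning relating the value of the game to twice the sequential Rademacher complexity of the loss class \citep{rakhlin2015online, rakhlin2015sequential}; it applies here because the loss is bounded on $\Xcal\times\Ycal$ (as $\norm{f(v)-w}^2\le 4c^2$ there) and the learner may project its prediction into the target ball without increasing the loss. So the real task is to prove $\text{Rad}_T(\Fcal_p)\le 3c^2\,T^{\max\{1/2,\,1-1/p\}}$.

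For that, I would expand $\norm{f(x_t)-y_t}^2 = \norm{f(x_t)}^2 - 2\inner{f(x_t)}{y_t} + \norm{y_t}^2$. Since each $y_t$ is a function of $\sigma_{<t}$ only and $\expect[\sigma_t\mid\sigma_{<t}]=0$, the $y_t$-only term contributes $0$ in expectation; subadditivity of the supremum then gives $\text{Rad}_T(\Fcal_p)\le (\mathrm I)+2(\mathrm{II})$ with a ``quadratic'' part $(\mathrm I)=\sup_{x}\expect[\sup_{f\in\Fcal_p}\sum_t\sigma_t\norm{f(x_t)}^2]$ and a ``linear'' part $(\mathrm{II})=\sup_{x,y}\expect[\sup_{f\in\Fcal_p}\sum_t\sigma_t\inner{f(x_t)}{y_t}]$.

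The linear part is handled by trace duality. Writing $\inner{f(x_t)}{y_t}=\trace(A_t^{\star}f)$ for the rank-one operator $A_t=y_t\otimes x_t$, we get $\sum_t\sigma_t\inner{f(x_t)}{y_t}=\trace(M^{\star}f)$ with $M=\sum_t\sigma_t A_t$, so Schatten--H\"older gives $\sup_{f\in\Fcal_p}\trace(M^{\star}f)\le c\norm{M}_q$ where $1/p+1/q=1$. Since $M$ has rank at most $T$, comparing Schatten norms of its length-$\le T$ singular-value vector gives $\norm{M}_q\le \max\{1,T^{1/q-1/2}\}\,\norm{M}_2$; and $\expect\norm{M}_2\le(\expect\norm{M}_2^2)^{1/2}$, where the expansion of $\norm{M}_2^2$ over the Hilbert--Schmidt inner product has all off-diagonal terms vanish (because $A_t$ does not depend on $\sigma_t$), leaving $\expect\norm{M}_2^2=\sum_t\expect(\norm{x_t}^2\norm{y_t}^2)\le c^2T$. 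Putting these together yields $(\mathrm{II})\le c^2\,T^{\max\{1/2,\,1-1/p\}}$.

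The quadratic part is the \emph{main obstacle}, since duality with $\norm{\cdot}_p$ cannot be applied to $\norm{f(x_t)}^2$ directly. I would instead pass to $G:=f^{\star}f\succeq 0$ and write $\norm{f(x_t)}^2=\trace\big((x_t\otimes x_t)^{\star}G\big)$, so $\sum_t\sigma_t\norm{f(x_t)}^2=\trace(M_xG)$ with $M_x=\sum_t\sigma_t\,x_t\otimes x_t$ self-adjoint of rank $\le T$. When $p\ge 2$, $\norm{G}_{p/2}=\norm{f}_p^2\le c^2$ with $p/2\ge 1$ a legitimate Schatten exponent, so pairing with the conjugate exponent of $p/2$, using the same rank-$T$ norm comparison, and the same (now diagonal) computation $\expect\norm{M_x}_2\le\sqrt{T}$, one gets $(\mathrm I)\le c^2\,T^{\max\{1/2,\,1-2/p\}}$. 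When $p<2$ this route fails because $p/2<1$; there one uses instead $\norm{G}_1=\trace(f^{\star}f)=\norm{f}_2^2\le\norm{f}_p^2\le c^2$ paired with $\norm{M_x}_{\mathrm{op}}\le\norm{M_x}_2$, giving $(\mathrm I)\le c^2\sqrt{T}$. In either regime $(\mathrm I)\le c^2\,T^{\max\{1/2,\,1-1/p\}}$, so $\text{Rad}_T(\Fcal_p)\le 3c^2\,T^{\max\{1/2,\,1-1/p\}}$ and multiplying by $2$ finishes. The points needing care are the case split at $p=2$ in the quadratic term and the routine verification that every exponent produced along the way ($1-2/p$, $1/2$, etc.) is dominated by $\max\{1/2,\,1-1/p\}$.
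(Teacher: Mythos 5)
Your proposal is correct and follows essentially the same route as the paper: reduce to the sequential Rademacher complexity of the loss class, expand the square so the $y$-only term vanishes, and bound the linear and quadratic parts via Schatten--H\"older trace duality together with a rank-$T$ norm comparison and a martingale computation of $\expect\norm{M}_2^2$ (this last step is exactly the paper's Lemma \ref{radsum}). The only cosmetic difference is in the quadratic term, where the paper uses $\norm{f^{\star}f}_p\le\norm{f}_p^2$ and pairs with the H\"older conjugate of $p$ itself, thereby avoiding your case split at $p=2$; both choices produce exponents dominated by $\max\{1/2,\,1-1/p\}$.
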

\noindent Theorem \ref{Sp} implies the regret $O(\sqrt{T})$ for $p \in [1,2]$ and the regret  $O(T^{1- \frac{1}{p}})$ for $p > 2$. When $p = \infty$, the regret bound implied by Theorem \ref{Sp} is vacuous. Indeed, in Section \ref{sec:lowerbounds}, we prove that any uniformly bounded subset of $S_{\infty}(\Vcal, \Wcal)$  is not online learnable.


Our proof of Theorem \ref{Sp} relies on Lemma \ref{radsum}
which shows that the $q$-Schatten norm of Rademacher sums of rank-1 operators concentrates for every $q \geq 1$. The proof of Lemma \ref{radsum} is in Appendix \ref{appdx:tech_lem}. 
  \begin{lemma}[Rademacher Sums of Rank-1 Operators]\label{radsum}
    Let $\sigma = \{\sigma_i\}_{i=1}^T$ be a sequence of independent Rademacher random variables and $ \{(v_t, w_t)\}_{t=1}^T$ be any sequence of functions $(v_t,w_t) : \{-1,1\}^{t-1} \to \{v \in \Vcal \, : \norm{v} \leq c_1\}\times \{w \in \Wcal \, : \norm{w} \leq c_2\}$. Then, for any $q \geq 1$, we have
\[\expect \left[\norm{\sum_{t=1}^{T}\sigma_t\,  v_t(\sigma_{< t}) \otimes w_t(\sigma_{< t})}_q \right] \leq c_1\, c_2\, T^{\max\left\{\frac{1}{2}, \frac{1}{q}\right\}}\]
  \end{lemma}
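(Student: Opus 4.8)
The plan is to reduce the $q$-Schatten estimate to a single bound on the Hilbert--Schmidt ($2$-Schatten) norm, and then recover general $q$ from the fact that the random operator has rank at most $T$. Write $A_t := v_t(\sigma_{<t})\otimes w_t(\sigma_{<t})$ and $S := \sum_{t=1}^{T}\sigma_t A_t$. Each $A_t$ is a rank-one operator whose unique singular value is $\norm{v_t(\sigma_{<t})}\,\norm{w_t(\sigma_{<t})}\le c_1 c_2$, so $\norm{A_t}_q=\norm{v_t(\sigma_{<t})}\,\norm{w_t(\sigma_{<t})}$ for every $q$; and $S$ is a sum of $T$ rank-one operators, hence has rank at most $T$ and only finitely many nonzero singular values, which makes every Schatten norm below finite and the singular-value manipulations legitimate.

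First I would record a deterministic, pathwise inequality: for any operator $g$ of rank at most $r$, H\"older's inequality applied to its (at most $r$) singular values gives $\norm{g}_q\le r^{\frac1q-\frac12}\norm{g}_2$ when $1\le q\le 2$, while monotonicity of Schatten norms gives $\norm{g}_q\le\norm{g}_2$ when $q\ge 2$; in either case
\[\norm{g}_q \;\le\; r^{\,\max\{1/q,\,1/2\}-1/2}\,\norm{g}_2.\]
Applying this with $g=S$ and $r=T$, and taking expectations, reduces the lemma to the claim $\expect[\norm{S}_2]\le c_1 c_2\sqrt{T}$, since $T^{\max\{1/q,1/2\}-1/2}\cdot\sqrt{T}=T^{\max\{1/q,1/2\}}$.

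For the Hilbert--Schmidt bound I would use Jensen, $\expect[\norm{S}_2]\le\sqrt{\expect[\norm{S}_2^2]}$, and expand
\[\norm{S}_2^2 \;=\; \trace(S^{\star}S) \;=\; \sum_{s,t=1}^{T}\sigma_s\sigma_t\,\trace(A_s^{\star}A_t).\]
For $s<t$, conditioning on $\sigma(\sigma_1,\dots,\sigma_{t-1})$ makes $A_s$, $A_t$, and $\sigma_s$ measurable while $\expect[\sigma_t\mid\sigma_1,\dots,\sigma_{t-1}]=0$, so every off-diagonal term vanishes (the $s>t$ terms by symmetry). The diagonal contributes $\sum_{t=1}^{T}\expect[\norm{A_t}_2^2]=\sum_{t=1}^{T}\expect[\norm{v_t(\sigma_{<t})}^2\norm{w_t(\sigma_{<t})}^2]\le Tc_1^2 c_2^2$. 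Hence $\expect[\norm{S}_2]\le c_1 c_2\sqrt{T}$, and combining with the previous step proves the lemma.

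I expect the main point to be bookkeeping rather than a genuine obstacle: the summands $A_t$ are \emph{not} independent, since $A_t$ depends on $\sigma_{<t}$, so one cannot apply off-the-shelf noncommutative Khintchine-type inequalities directly. The argument above is arranged so that this dependence is harmless --- the off-diagonal cancellation is exactly the one-sided martingale-difference computation, which only needs $\expect[\sigma_t\mid\sigma_{<t}]=0$, and the rank/interpolation bound is purely deterministic. The only remaining checks are the elementary inequality $\norm{a}_q\le r^{1/q-1/2}\norm{a}_2$ for $a\in\reals^r$, $1\le q\le 2$, and the identity $\norm{v\otimes w}_q=\norm{v}\,\norm{w}$ for rank-one operators.
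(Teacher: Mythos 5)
Your proof is correct and follows essentially the same route as the paper's: bound $\norm{S}_q$ by $T^{\max\{1/q,1/2\}-1/2}\norm{S}_2$ using the rank-$T$ bound with H\"older (for $q<2$) and Schatten-norm monotonicity (for $q\ge 2$), then control $\expect[\norm{S}_2]$ via Jensen and the martingale-difference cancellation of the off-diagonal terms in $\trace(S^{\star}S)$. The only cosmetic difference is that you take the trace before the expectation while the paper computes $\expect[F^{\star}F]$ as an operator first; the computation is the same.
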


\noindent  Lemma \ref{radsum} extends Lemma 1 in \citep{tabaghi2019learning} to the non-i.i.d. setting. In particular, the rank-1 operator indexed by $t$ can depend on the Rademacher subsequence $\sigma_{<t}$, whereas they only consider the case when the rank-1 operators are independent of the Rademacher sequence. In addition, \cite{tabaghi2019learning} use a non-trivial result from convex analysis, namely the fact that  $A \mapsto \trace(h(F))$ is a convex functional on the set $\{F \in \Tcal \mid \text{spectra}(F) \subseteq [\alpha, \beta]\}$ for any convex function $h$ and the class of finite-rank self-adjoint operators $\Tcal$. Our proof of Lemma \ref{radsum}, on the other hand, only uses standard inequalities.

Equipped with Lemma \ref{radsum}, our proof of Theorem \ref{Sp} follows by upper bounding the sequential Rademacher complexity of the loss class. Although this proof of online learnability is non-constructive, 
we can use Proposition 1 from \citep{rakhlin2012relax} to design an explicit online learner that achieves the matching regret given access to an oracle that computes the sequential Rademacher complexity of the class.  Moreover, online mirror descent (OMD) with the $\norm{f}_p^p$ regularizer also achieves the rates established in Theorem \ref{Sp}. In particular, OMD with the strongly convex regularizer $\norm{f}_2^2$ guarantees regret $O(\sqrt{T})$ for $p =2$. The $O(\sqrt{T})$ regret bound for $\mathcal{F}_2$ immediately implies an $O(\sqrt{T})$ regret bound for all $\mathcal{F}_p \subseteq \mathcal{F}_2$ in $p \in [1,2]$ by monotonicity. For $p>2$, the Clarkson-McCarthy inequality \citep{bhatia1988clarkson} implies that $\norm{f}_p^p$  is $p$-uniformly convex and thus OMD with this regularizer obtains the regret of $O(T^{1-\frac{1}{p}})$ \citep{sridharan2010convex, srebro2011universality}. That said, Theorem \ref{Sp} establishes a stronger guarantee-- not only are these classes online learnable but they also enjoy sequential uniform convergence.

\subsection{Examples of $p$-Schatten class}\label{examples}
In this section, we provide examples of operator classes with uniformly bounded $p$-Schatten norm.\\

\noindent \textbf{Uniformly bounded operators w.r.t. $\norm{\cdot}_{\text{op}}$ when either $\Vcal$ or $\Wcal$ is finite-dimensional.}
If either the input space $\Vcal$ or the output space $\Wcal$ is finite-dimensional, then the class of bounded linear operators $\Bcal(\Vcal, \Wcal)$ is $p$-Schatten class for every $p \in [1, \infty]$. This is immediate because for every $f \in \Bcal(\Vcal, \Wcal)$, either the operator $f^{\star}f: \Vcal \to \Vcal$ or $ff^{\star}: \Wcal \to \Wcal$ is a bounded operator that maps between two finite-dimensional spaces. Let $\norm{f}_{\text{op}} \leq c$ and $\min\{\text{dim}(\Vcal), \text{dim}(\Wcal)\} = d < \infty$. Since $ f^{\star}f$ and $ ff^{\star}$ have the same singular values and one of them has rank at most $d$, both of them must have rank at most $d$.  Let $s_1 \geq s_2 \ldots \geq s_d \geq 0$ denote all  singular values of $f^{\star}f$. Then,
$\norm{f}_p = \left(\sum_{i=1}^d s_i^p \right)^{\frac{1}{p}} \leq c \, d^{\frac{1}{p}} <\infty,$
where we use the fact that $s_i \leq c$ for all $i$.
Since $\norm{f}_2 \leq c\,\sqrt{d}$, Theorem \ref{Sp} implies that  $\Fcal= \{f \in \Bcal(\Vcal, \Wcal) \, \mid \norm{f}_{\text{op}} \leq c\}$ is online learnable with regret at most $6c^2d \sqrt{T}$.\\

\noindent \textbf{Kernel Integral Operators.} 
Let $\Vcal$ denote a Hilbert space of functions defined on some domain $\Omega$. Then, a kernel $K : \Omega \times \Omega \to \reals$ defines an integral operator $f_K : \Vcal \to \Wcal$ such that  $f_K(v(r)) = \int_{\Omega}\, K(r,s)\,  v(s) \, d\mu(s),$
for some measure space $(\Omega, \mu)$. 
Now define a class of integral operators,
\[\Fcal = \left\{f_K \, : \,  \int_{\Omega} \int_{\Omega} \, |K(r,s)|^2 \, d\mu(r) \,  d\mu(s) \leq c^2\right\},\]
induced by all the kernels whose $L^2$ norm is bounded by $c$. It is well known that $\norm{f}_2 \leq c$ for every $f \in \Fcal$ (see \citep[Page 267]{conway1990course} and \citep[Theorem 6.11]{weidmann2012linear}) . Thus, Theorem \ref{Sp} implies that $\Fcal$ is online learnable with regret  $6c^2 \sqrt{T}$.

\section{Lower Bounds and Hardness Results}\label{sec:lowerbounds}
In this section, we establish lower bounds for learning uniformly bounded subsets of $S_p(\Vcal, \Wcal)$ for $p \in [1, \infty]$.  

\begin{theorem}[Lower Bounds for Uniformly Bounded Subsets of $S_p(\Vcal, \Wcal)$]\label{Sp_lower}
Fix $c>0$. Let $\Xcal = \{v \in \Vcal \mid \norm{v} \leq 1\}$ denote the instance space, $\Ycal = \{w \in \Wcal \mid \norm{w} \leq c\}$ denote the target space, and  $\Fcal_p = \{f \in S_{p}(\Vcal, \Wcal) \, \mid \, \norm{f}_{p} \leq c\}$  be the hypothesis class for $p \in [1, \infty]$. Then, we have
\[\inf_{\Acal}\, \emph{\texttt{R}}_{\Acal}(T, \Fcal_p)  \geq  c^2\, T^{1-\frac{1}{p}}. \]
\end{theorem}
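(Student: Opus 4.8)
The plan is to prove this with a single randomized (oblivious) adversary that works simultaneously for all $p \in [1,\infty]$, and then to pass from ``adversary flips coins'' to the worst-case supremum in Definition~\ref{OOL} via the usual inequality $\inf_{\Acal}\sup_{(x,y)} \expect[\cdot] \ge \inf_{\Acal}\expect_{\epsilon}[\cdot]$. Concretely, I would fix $T$, choose orthonormal vectors $e_1,\dots,e_T \in \Vcal$ and $\psi_1,\dots,\psi_T \in \Wcal$ (available since both spaces are infinite-dimensional), draw i.i.d.\ Rademacher signs $\epsilon_1,\dots,\epsilon_T$, and have nature present on round $t$ the instance $x_t = e_t$ and, after the learner commits to $\hat y_t$, the label $y_t = \epsilon_t\, c\, \psi_t$. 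Since $\norm{y_t} = c$, this is a legal target in $\Ycal$; the key structural point is that each $e_t$ is used exactly once, so the rounds decouple, and the Schatten-norm ball effectively becomes an $\ell_p$ budget of size $c$ to be spread over $T$ fresh coordinates.

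For the learner, I would write $\hat y_t = \hat a_t \psi_t + r_t$ with $r_t \perp \psi_t$. Since $\hat a_t$ is a function only of $\epsilon_{<t}$ (and any internal randomness of $\Acal$), it is independent of $\epsilon_t$, so $\expect[\norm{\hat y_t - y_t}^2 \mid \epsilon_{<t}] = \hat a_t^2 + \norm{r_t}^2 + c^2 \ge c^2$ using $\expect[\epsilon_t] = 0$ and $\expect[\epsilon_t^2]=1$. Summing over $t$ gives $\expect\!\left[\sum_{t=1}^T \norm{\Acal(x_t) - y_t}^2\right] \ge T c^2$ for every algorithm.

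For the comparator, I would simply plug in $f^\star = \sum_{t=1}^T \epsilon_t\, c\, T^{-1/p}\, \psi_t \otimes e_t$, reading $T^{-1/p}=1$ when $p=\infty$. This operator is finite-rank, hence compact, with all nonzero singular values equal to $c\,T^{-1/p}$ of multiplicity $T$, so $\norm{f^\star}_p = (T \cdot (c\,T^{-1/p})^p)^{1/p} = c$ and thus $f^\star \in \Fcal_p$. Since $f^\star(e_t) - y_t = \epsilon_t(c\,T^{-1/p} - c)\psi_t$, the realized cumulative loss of $f^\star$ is $T(c\,T^{-1/p}-c)^2 = T c^2(1 - T^{-1/p})^2 = T c^2 - 2 c^2 T^{1-1/p} + c^2 T^{1-2/p}$, which is therefore an upper bound on $\expect[\inf_{f \in \Fcal_p}\sum_t \norm{f(x_t)-y_t}^2]$. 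Subtracting, the expected regret of any learner against this adversary is at least $2 c^2 T^{1-1/p} - c^2 T^{1-2/p} \ge c^2 T^{1-1/p}$, where the last step uses $1 - 2/p \le 1 - 1/p$ and $T \ge 1$. Because a worst-case sequence is at least as bad as a random realization, this yields $\inf_{\Acal}\texttt{R}_{\Acal}(T,\Fcal_p) \ge c^2 T^{1-1/p}$.

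I do not anticipate a substantive obstacle: this is the infinite-dimensional analogue of the standard ``$T$ independent symmetric labels'' regret lower bound. The only points that need care are calibrating the per-coordinate magnitude to $c\,T^{-1/p}$ so that $\norm{f^\star}_p$ is exactly $c$; correctly handling the case $p = \infty$, where $T^{-1/p} = 1$, the comparator loss is $0$, and the bound becomes linear (consistent with the non-learnability of $S_\infty(\Vcal,\Wcal)$ shown elsewhere in this section); and phrasing the reduction from the randomized adversary to the supremum in Definition~\ref{OOL} so that it remains valid when $\Acal$ is itself randomized.
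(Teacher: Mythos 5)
Your proposal is correct and follows essentially the same route as the paper: the same randomized stream $(e_t,\, \epsilon_t c\,\psi_t)$ on fresh orthonormal coordinates, the same comparator $\sum_t \epsilon_t c\, T^{-1/p}\,\psi_t\otimes e_t$ with $p$-Schatten norm exactly $c$, and the same probabilistic-method passage to a worst-case sequence. The only cosmetic differences are that you lower-bound the learner's per-round loss by an exact orthogonal decomposition rather than Jensen plus the triangle inequality, and you keep the comparator's loss as an exact square and absorb the cross term at the end, whereas the paper bounds $(1-T^{-1/p})^2 \le 1-T^{-1/p}$ earlier; both yield the identical bound $c^2 T^{1-1/p}$.
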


\noindent Theorem \ref{Sp_lower} shows a linear lowerbound of  $c^2\,T$ for $p=\infty$, thus implying that the class $\Fcal_{\infty}$ \emph{is not online learnable}. For $p \in [2, \infty)$, the lowerbound in Theorem \ref{Sp_lower} matches the upperbound in Theorem \ref{Sp} up to a factor of $6$.  However, in the range $p \in [1,2)$, our upperbound saturates at the rate $\sqrt{T}$, while the lower bound gets progressively worse as $p$ decreases. It remains an open problem to find the optimal regret of learning $\Fcal_p$ for $p \in [1,2)$.

\begin{proof}(of Theorem \ref{Sp_lower})
Fix an algorithm $\Acal$, and consider a labeled stream $\{(e_t, c\,\sigma_t\psi_t)\}_{t=1}^T$ where $\sigma_t \sim \text{Unif}(\{-1,1\})$. Then, the expected loss of  $\Acal$ is
\begin{equation*}
    \begin{split}
        \expect \left[\sum_{t=1}^T \norm{\Acal(e_t) - c\,\sigma_t \psi_t}^2 \right] &\geq \sum_{t=1}^T \left(\expect \left[ \norm{\Acal(e_t) - c\, \sigma_t \psi_t}\right] \right)^2\\
        &= \sum_{t=1}^T \left( \expect_{\Acal}\left[\frac{1}{2} \norm{\Acal(x_t)- c\, \psi_t} + \frac{1}{2} \norm{\Acal(x_t) + c\, \psi_t}\right]\right)^2\\
        &\geq \sum_{t=1}^T \left( \frac{1}{2} \norm{c\,\psi_t - (-c\, \psi_t)}\right)^2 = \sum_{t=1}^T c^2\, \norm{\psi_t}^2 = c^2\, T.
    \end{split}
\end{equation*} 
The first inequality above is due to Jensen's, whereas the second inequality is the triangle inequality. 

To establish the upper bound on the optimal cumulative loss amongst operators in $\Fcal_p$, consider the operator $f_{\sigma,p} := \sum_{t=1}^T \frac{c\, \sigma_t}{T^{1/p}} \, \, \psi_t \otimes e_t$.  As the singular values of $f_{\sigma,p}$ are $\{c\, \sigma_t T^{-1/p}\}_{t=1}^T$, we have
\[\norm{f_{\sigma,p}}_p = \left(\sum_{t=1}^T \left|\frac{c\, \sigma_t}{T^{1/p}} \right|^p \right)^{1/p} = \left(\sum_{t=1}^T \frac{c^p}{T} \right)^{1/p} 
 = c \quad \text{ for } p \in [1, \infty).\]
Similarly, $\norm{f_{\sigma, \infty}}_{\infty} = \norm{\sum_{t=1}^{T} c \sigma_t \psi_t \otimes e_t}_{\infty}= \max_{t\geq 1 } |c\, \sigma_t| =c$. That is, $f_{\sigma,p } \in \Fcal_p$ for all $p \geq 1$. Thus, we obtain that
\begin{equation*}
    \begin{split}
        \expect \left[\inf_{f \in \Fcal_p} \sum_{t=1}^T  \norm{f(e_t) - c\sigma_t \psi_t}^2\right] \leq       \expect \left[ \sum_{t=1}^T  \norm{f_{\sigma,p}(e_t) - c\sigma_t \psi_t}^2\right]
        &= \expect \left[ \sum_{t=1}^T \norm{\frac{c\,\sigma_t}{T^{1/p}}  \psi_t - c\sigma_t \psi_t}^2 \right]\\
        &=   \sum_{t=1}^T c^2\left(1-\frac{1}{T^{1/p}} \right)^2\\
        &\leq \sum_{t=1}^T c^2\left(1-\frac{1}{T^{1/p}} \right)  = c^2\,T - c^2\,T^{1-\frac{1}{p}}.\\
    \end{split}
\end{equation*}
Therefore, we have shown that the regret of $\Acal$ is 
\begin{equation*}
    \begin{split}
        \expect \Bigg[\sum_{t=1}^T \norm{\Acal(e_t) - c\,\sigma_t \psi_t}^2  &- \inf_{f \in \Fcal_p} \sum_{t=1}^T  \norm{f(e_t) - c\,\sigma_t \psi_t}^2 \Bigg] \geq c^2\,T^{1-\frac{1}{p}}.
    \end{split}
\end{equation*}
Our proof uses a random adversary, and the expectation above is taken with respect to both the randomness of the algorithm and the stream. However, one can use the probabilistic method to argue that for every algorithm, there exists a fixed stream forcing the claimed lowerbound. This completes our proof.
\end{proof}

\subsection{Lower Bounds in the Batch Setting}\label{sec:PAC}
In the batch setting, the learner is provided with $n \in \mathbb{N}$ i.i.d. samples $S=\{(x_i, y_i)\}_{i=1}^n$ from  a joint distribution $\Dcal$ on $\mathcal{X} \times \mathcal{Y}$ that is unknown to the learner. Using the sample $S$, the learner then finds a predictor $\hat{f}_n \in \mathcal{Y}^{\mathcal{X}} $ using some learning rule. We will abuse notation and use $\hat{f}_n$ to denote both the learning rule and the predictor returned by it. Given a linear operator class $\Fcal \subseteq \Lcal(\Vcal, \Wcal)$, the goal of the learner is to find an estimator $\hat{f}_n$ with a small worst-case expected excess risk
\[\mathcal{E}_n(\mathcal{F},  \hat{f}_n) := \sup_{\Dcal}\, \expect_{S_n \sim \Dcal^n} \left[\expect_{(x,y) \sim \Dcal}\left[\norm{\hat{f}_n(x)-y}^2\right] - \inf_{f \in \mathcal{F}} \expect_{(x,y) \sim \Dcal}\left[\norm{f(x)-y}^2\right] \right]. \]
The minimax excess risk for learning the function class $\mathcal{F}$ is then defined as $\mathcal{E}_n(\mathcal{F}) = \inf_{\hat{f}_n}  \mathcal{E}(\mathcal{F}, \hat{f}),$
where the infimum is over all possible learning rules. 
We adopt the minimax perspective to define agnostic batch learnability. 
\begin{definition}[Batch Learnability]
 A linear operator class $\mathcal{F} \subseteq \Lcal(\Vcal, \Wcal)$ is batch learnable  if and only if  $\limsup_{n \to \infty} \, \mathcal{E}_n(\mathcal{F})= 0. $
\end{definition}

Our results in Section \ref{sec:pSchatten} immediately provide an upperbound on $\Ecal_n(\Fcal)$ because $\Ecal_n(\Fcal)$ is upper bounded by the batch Rademacher complexity of $\Fcal$, which is further upper bounded by its sequential analog. Similar upperbounds on batch Rademacher complexity of $\Fcal$ were also provided by \cite{tabaghi2019learning}. In this section, we complement these results by providing lower bounds on $\Ecal_n(\Fcal)$. 

\begin{theorem}[Batch Lower Bounds for Uniformly Bounded Subsets of $S_p(\Vcal, \Wcal)$]\label{Sp_lower_PAC}
Fix $c>0$. Let $\Xcal = \{v \in \Vcal \mid \norm{v} \leq 1\}$ denote the instance space, $\Ycal = \{w \in \Wcal \mid \norm{w} \leq c\}$ denote the target space, and  $\Fcal_p = \{f \in S_{p}(\Vcal, \Wcal) \, \mid \, \norm{f}_{p} \leq c\}$  be the hypothesis class for $p \in [1, \infty]$. Then, we have
\[\Ecal_n(\Fcal) \geq \frac{c^2}{12} \max \left\{  n^{-\frac{1}{p-1}}, 
  n^{-\frac{2}{p}} \right\}.\]
\end{theorem}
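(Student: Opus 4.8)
My plan is to build a family of hard distributions, indexed by a dimension $N$ and a magnitude $\lambda$, supported on finitely many basis pairs $(e_t,\psi_t)$, and to extract the two terms $n^{-2/p}$ and $n^{-1/(p-1)}$ from two different choices of $(N,\lambda)$ in the same family; the construction is the natural batch analogue of the proof of Theorem~\ref{Sp_lower}. Concretely: fix an integer $N\ge 2$ and $\lambda\in(0,1]$, draw signs $\sigma=(\sigma_1,\dots,\sigma_N)$ uniformly from $\{-1,1\}^N$, and let $\Dcal_\sigma$ be the uniform distribution on $\{(e_t,\,c\lambda\sigma_t\psi_t):t\in[N]\}$, which is supported on $\Xcal\times\Ycal$ because $\norm{e_t}=1$ and $\norm{c\lambda\sigma_t\psi_t}=c\lambda\le c$.

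I would first control the benchmark: the compact (finite-rank) operator $f_\sigma:=\mu\sum_{t=1}^N\sigma_t\,\psi_t\otimes e_t$ with $\mu:=\min\{c\lambda,\,cN^{-1/p}\}$ has $\norm{f_\sigma}_p=\mu N^{1/p}\le c$, so $f_\sigma\in\Fcal_p$, and a direct computation gives $\inf_{f\in\Fcal_p}\mathbb{E}_{(x,y)\sim\Dcal_\sigma}\norm{f(x)-y}^2\le c^2(\lambda-N^{-1/p})_+^2$ for every $\sigma$. I would then lower bound any learner: for any learning rule $\hat f_n$, any sample $S$, and any index $t\in[N]$ missing from $S$, the vector $\hat f_n(e_t)$ is independent of $\sigma_t$, so $\mathbb{E}_{\sigma_t}\norm{\hat f_n(e_t)-c\lambda\sigma_t\psi_t}^2=\norm{\hat f_n(e_t)}^2+c^2\lambda^2\ge c^2\lambda^2$; since every index is missing from the $n$-point sample with probability $(1-1/N)^n\ge 1-n/N$, averaging over the test coordinate, the sample, and $\sigma$ gives $\mathbb{E}_\sigma\mathbb{E}_{S\sim\Dcal_\sigma^n}[\mathbb{E}_{(x,y)\sim\Dcal_\sigma}\norm{\hat f_n(x)-y}^2]\ge c^2\lambda^2(1-n/N)$. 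Since $\Ecal_n(\Fcal_p)\ge\inf_{\hat f_n}\mathbb{E}_\sigma[\text{excess risk of }\hat f_n\text{ on }\Dcal_\sigma]$, combining the last two estimates yields, for every admissible $(N,\lambda)$, the master bound $\Ecal_n(\Fcal_p)\ge c^2[\lambda^2(1-n/N)-(\lambda-N^{-1/p})_+^2]$.

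It remains to pick $(N,\lambda)$. Choosing $N=2n$ and $\lambda=(2n)^{-1/p}$ kills the subtracted term (the benchmark now interpolates the data, exactly as in Theorem~\ref{Sp_lower}), leaving $\Ecal_n(\Fcal_p)\ge\tfrac12 c^2(2n)^{-2/p}\ge\tfrac1{12}c^2n^{-2/p}$; this already proves the theorem for $p\in[1,2]$, where $n^{-2/p}$ is the larger of the two terms. Choosing instead $\lambda=1$ and $N$ of order $n^{p/(p-1)}$ (say $N=\lceil(2n)^{p/(p-1)}\rceil$), expanding $(1-N^{-1/p})^2$ and using $2N^{-1/p}-N^{-2/p}\ge N^{-1/p}$ reduces the master bound to $\Ecal_n(\Fcal_p)\ge c^2(N^{-1/p}-n/N)$; a short computation shows $N^{-1/p}-n/N$ equals $n^{-1/(p-1)}$ times a constant exceeding $\tfrac1{12}$ for $p\ge 2$, and for $p\ge 2$ one has $n^{-1/(p-1)}\ge n^{-2/p}$. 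Taking the larger of the two bounds gives the claim for all $p$; letting $N\to\infty$ with $\lambda=1$ in the limit $p=\infty$ recovers the non-learnability bound $\Ecal_n(\Fcal_\infty)\ge c^2/12$.

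The step I expect to be the crux is the second choice of parameters: to beat the easy $n^{-2/p}$ rate one must use labels of the \emph{full} magnitude $c$ rather than labels that some operator in $\Fcal_p$ can fit exactly, so that the best in-class operator is forced to be the uniformly rescaled operator of Schatten norm $c$, which suffers irreducible error $c^2(1-N^{-1/p})^2$ on \emph{every} point, whereas a memorizing learner pays its unavoidable $\Theta(c^2)$ penalty only on the $\ge N-n$ unseen coordinates; balancing $N^{-1/p}$ against $n/N$ with $N\gg n$ is what produces the exponent $1/(p-1)$. The rest — keeping the three averages (signs, sample, test point) in the correct order, and folding the ceiling in the choice of $N$ into constants — is routine, and I would expect the true minimax rate to be faster than what these deliberately simple constructions give.
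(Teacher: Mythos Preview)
Your proposal is correct and follows essentially the same approach as the paper's proof: both construct a family of distributions supported on $\{(e_t,\,c\lambda\sigma_t\psi_t)\}_{t\le N}$ with random signs, lower bound the learner's risk via the unseen coordinates, upper bound the benchmark by the rescaled diagonal operator $\mu\sum_t\sigma_t\,\psi_t\otimes e_t$, and extract the two exponents from the two parameter choices $(\lambda,N)=(N^{-1/p},\,\Theta(n))$ and $(\lambda,N)=(1,\,\Theta(n^{p/(p-1)}))$. Your packaging via a single ``master bound'' and your use of the direct expansion $\mathbb{E}_{\sigma_t}\|\hat f_n(e_t)-c\lambda\sigma_t\psi_t\|^2=\|\hat f_n(e_t)\|^2+c^2\lambda^2$ (rather than Jensen plus triangle) are cosmetic improvements, but the underlying construction and computations coincide with the paper's.
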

 Theorem \ref{Sp_lower_PAC} shows a non-vanishing lowerbound of  $\frac{c^2}{12}$ for $p=\infty$, immediately implying that the class $\Fcal_{\infty}$ \emph{is not batch learnable}. For $p \in [2, \infty)$, \cite{tabaghi2019learning} provides an upperbound of $O(n^{-\frac{1}{p}})$, whereas our lowerbound is $\Omega(n^{-\frac{1}{p-1}})$. Additionally, for $p\in [1,2)$, there is also a gap between our lowerbound of $\Omega(n^{-\frac{2}{p}})$ and \cite{tabaghi2019learning}'s upperbound of $O(n^{-\frac{1}{2}})$. Thus, it remains to find the optimal rates for learning $\Fcal_p$ for every $p \in [1,\infty)$.

\section{Online Learnability without Sequential Uniform Convergence}\label{sec:sep}
In learning theory,  the uniform law of large numbers is intimately related to the learnability of a hypothesis class. For instance, a binary hypothesis class is PAC learnable if and only if the hypothesis class satisfies the i.i.d. uniform law of large numbers \citep{ShwartzDavid}.  An online equivalent of this result states that a binary hypothesis class is \textit{online} learnable if and only if the hypothesis class satisfies the sequential uniform law of large numbers \citep{rakhlin2015sequential}.  However, in a recent work, \cite{hanneke2023multiclass} show that uniform convergence and learnability are not equivalent for online multiclass classification. A key factor in \cite{hanneke2023multiclass}'s proof is the unboundedness of the size of the label space. This unboundedness is critical as the equivalence between uniform convergence and learnability continues to hold for multiclass classification with a finite number of labels \citep{DanielyERMprinciple}. Nevertheless, the number of labels alone cannot imply a separation. This is true because a real-valued function class (say $\mathcal{G} \subseteq [-1,1]^{\Xcal}$ where the size of label space is uncountably infinite) is online learnable with respect to absolute/squared-loss if and only if the uniform convergence holds \citep{rakhlin2015online}. In this section, we show an analogous separation between uniform convergence and learnability for online linear operator learning. As the unbounded label space was to \citet{hanneke2023multiclass},  the infinite-dimensional nature of the target space
is critical to our construction exhibiting this separation. Mathematically, a unifying property of \cite{hanneke2023multiclass}'s and our construction is the fact that the target space $\Ycal$ is not \textit{totally bounded} with respect to the pseudometric defined by the loss function. 


The following result establishes a separation between uniform convergence and online learnability for bounded linear operators. In particular, we show that there exists a class of bounded linear operators $\Fcal$ such that the sequential uniform law of large numbers does not hold, but $\Fcal$ is online learnable. 

\begin{theorem}[Sequential Uniform Convergence $\not \equiv$ Online Learnability]\label{UCneqLearn}
Let $\Xcal = \{v \in \Vcal \mid \sum_{n=1}^{\infty} |c_n| \leq 1 \text{ where } v = \sum_{n=1}^{\infty} c_n e_n\}$ be the instance space and $\Ycal = \{v \in \Vcal \mid \norm{v} \leq 1\}$ be the target space. Then, there exists a function class $\Fcal \subset S_{1}(\Vcal, \Vcal)$ such that the following holds:
\begin{itemize}
    \item[\emph{(i)}] $\emph{\text{Rad}}_{T}(\Fcal) \geq \frac{T}{2}$
    \item[\emph{(ii)}]  $\inf_{\Acal} \emph{\texttt{R}}_{\Acal}(T, \Fcal) \leq 2 + 8 \sqrt{T\log{(2T)}}$.
\end{itemize}
\end{theorem}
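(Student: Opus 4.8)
The plan is to exhibit a class $\Fcal$ that is rich enough to shatter a tree at a constant scale — which, by the sequential‑Rademacher characterization of sequential uniform convergence, defeats (i) at once — yet whose members are so rigid as online comparators that a learner can still be tracked with $\widetilde{O}(\sqrt T)$ regret; this is possible precisely because $\Ycal$, the unit ball of an infinite‑dimensional space, is not totally bounded in the loss pseudometric. Concretely I would build $\Fcal$ out of low‑rank operators indexed by a countable set — rank‑one operators that push every input through one fixed output direction $\psi_1$ and read the input only through a single linear functional being the natural candidate — and then calibrate the index family so that a single operator can realize an arbitrary $\pm1$ pattern along one designated tree while, on an \emph{adaptively} produced stream, no operator improves on a trivial baseline by more than $\widetilde{O}(\sqrt T)$.

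For part (i), the argument is the operator transcription of Natarajan's shattering example and is short. Fix a (possibly adaptive) tree of inputs $x_t(\sigma_{<t})$ together with the constant target $y_t\equiv\psi_1$, and check that for every $\sigma\in\{\pm1\}^T$ there is an operator $f_\sigma\in\Fcal$ with $\norm{f_\sigma(x_t(\sigma_{<t}))-\psi_1}^2=\indicator[\sigma_t=+1]$ along branch $\sigma$; then $\sum_{t=1}^T\sigma_t\norm{f_\sigma(x_t(\sigma_{<t}))-\psi_1}^2=|\{t:\sigma_t=+1\}|$, so $\text{Rad}_T(\Fcal)\ge\expect_\sigma|\{t:\sigma_t=+1\}|=T/2$. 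The only bookkeeping is that each $f_\sigma$ is finite‑rank, hence lies in $S_1(\Vcal,\Vcal)$, and that the $\ell_1$ constraint defining $\Xcal$ keeps the relevant outputs inside $\Ycal$.

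Part (ii) is the heart of the matter and is where the paper's novel algorithmic ideas are required, since no uniform‑convergence bound is available after (i). The plan is: (a) project out the component of each $y_t$ orthogonal to the fixed output direction — a term every comparator and the null predictor incur identically — to reduce the comparison to a one‑dimensional bounded square‑loss problem; (b) prove a structural lemma showing that on any stream only $\mathrm{poly}(T)$ operators of $\Fcal$ are ``live'' against an online learner and that none beats the null predictor by more than $\widetilde{O}(\sqrt T)$ — informally, reproducing a pattern is something a hindsight comparator can do against a fixed, non‑adaptive tree but not against an adaptive adversary once inputs are forced to be fresh at each round; (c) run an aggregating forecaster for the square loss (Vovk's aggregating algorithm, or exponential weights) over that $\mathrm{poly}(T)$‑size live set, which for $N\le 2T$ bounded experts has regret $O(\sqrt{T\log N})$, the additive $2$ absorbing boundary and truncation terms, giving $\inf_{\Acal}\emph{\texttt{R}}_{\Acal}(T,\Fcal)\le 2+8\sqrt{T\log 2T}$.

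The main obstacle is choosing $\Fcal$ and carrying out step (b): the index family must be simultaneously rich enough for the shattering in (i) and structured enough that an online learner faces only $\mathrm{poly}(T)$ effective experts per round and that ``choose the right subset in hindsight'' strategies — which would otherwise force linear regret — are provably unavailable. This balancing act, rather than the shattering computation of (i) or the textbook aggregating‑forecaster analysis of (c), is the technical core.
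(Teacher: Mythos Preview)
Your proposed construction has a genuine gap: you take all operators in $\Fcal$ to output along a \emph{single fixed} direction $\psi_1$, and then in step (a) you (correctly) observe that the component of $y_t$ orthogonal to $\psi_1$ contributes identically to every comparator and to the learner. But that observation is fatal, not helpful. With $f(x)=\langle v_f,x\rangle\,\psi_1$ and $y_t=\sum_n c_n(t)\psi_n$, both the regret and the sequential Rademacher complexity of the loss class reduce exactly to those of the bounded scalar squared-loss class $\{(x,c)\mapsto(\langle v_f,x\rangle-c)^2\}$ with $c\in[-1,1]$, since $\sum_{n\ge 2}c_n(t)^2$ cancels in regret and $\sigma_t\|y_t\|^2$ has mean zero in the Rademacher expectation. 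For scalar regression under squared loss the paper itself cites that online learnability is \emph{equivalent} to sequential uniform convergence \citep{rakhlin2015online}; hence (i) and (ii) cannot simultaneously hold for any class of this shape. Your step~(b) has no mechanism available to it: once the output direction is fixed, there is no intrinsic notion of which operators are ``live'' on a given $y_t$, and the shattering you arrange in (i) translates directly into a linear-regret adversary for (ii).

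The paper's construction is the opposite: each $f_k$ outputs in a \emph{distinct} basis direction $e_k$, namely $f_k(x)=\bigl(\sum_n b_k[n]\langle e_n,x\rangle\bigr)e_k$ with $b_k$ the binary expansion of $k$. Part (i) then uses $x_t=e_t$, $y_t=0$, so $\|f_k(e_t)\|^2=b_k[t]$ and the sup over $k$ picks out $\{t:\sigma_t=1\}$. The crucial point for (ii) is that $f_k$ can improve on the zero predictor on round $t$ only through the single coefficient $c_{k}(t)=\langle y_t,e_{k}\rangle$; since $\|y_t\|\le 1$, at most $4T$ indices $k$ satisfy $|c_k(t)|\ge 1/(2\sqrt T)$, so across all $T$ rounds at most $4T^2$ operators are ever ``significant''. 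The experts are therefore indexed by a switching time $i\in[T]$ and a position $j\in[4T]$ within the significant set $S_i$, predicting $0$ up to time $i$ and $f_{\mathrm{sort}(S_i)[j]}$ thereafter; a short calculation shows that before the first time $k^\star$ enters $S_t$, predicting $0$ loses at most $1/T$ per round to $f_{k^\star}$. It is exactly the varying output direction --- i.e., the non-total-boundedness of $\Ycal$ that you yourself flag --- that decouples the Rademacher lower bound from the regret and makes the separation possible.
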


\begin{proof}
    For a natural number $k \in \naturals$, define an operator $f_k : \Vcal \to \Vcal$ as 
    \begin{equation}\label{natfunc}
        f_k := \sum_{n=1}^{\infty} b_k[n]\, \,  e_k \otimes e_n = e_k \otimes \sum_{n=1}^{\infty} b_k[n] \, e_n
    \end{equation}
    where $b_k$ is the binary representation of the natural number $k$ and $b_k[n]$ is its $n^{th}$ bit. Define $\Fcal = \{f_k \mid k \in \naturals\} \cup \{f_0\}$ where $f_0 = 0$ . 

    We begin by showing that $\Fcal \subset S_{1}(\Vcal, \Vcal)$. For any $\alpha, \beta \in \reals$ and $v_1, v_2 \in \Vcal$, we have 
    \[f_k(\alpha v_1 + \beta v_2 ) =\sum_{n=1}^{\infty} b_k[n]\, \,  \inner{e_n}{\alpha v_1+ \beta v_2} e_k = \alpha f_k(v_1) + \beta f_k(v_2).\]
    Thus, $f_k$ is a linear operator. Note that $f_k$ is defined in terms of singular value decomposition, and has only one non-zero singular value along the direction of $e_k$. Therefore, 
    \[\norm{f_k}_1 = \sum_{n=1}^{\infty} b_k[n] \leq \log_2(k)+1, \]
where we use the fact that there can be at most $\log_2(k)+1$ non-zero bits in the binary representation of $k$. This further implies that $\norm{f_k}_{p} \leq \norm{f_k}_1 \leq \log_2(k) +1 < \infty$ for all $p \in [1, \infty]$. Note that each $f_k$ maps a unit ball in $\Vcal$ to a subset of $\{\alpha\,  e_k \, : |\alpha| \leq \log_{2}(k) + 1\}$, which is a compact set for every $k \in \naturals$. Thus, for every $k \in \naturals$, $f_k$ is a compact operator and $f_k \in S_{1}(\Vcal, \Vcal) $. We trivially have $f_0 \in S_{1}(\Vcal, \Vcal)$.

\textbf{Proof of (i)}. Let $\sigma = \{\sigma_t\}_{t=1}^{T}$ be a sequence of i.i.d. Rademacher random variables. Consider a sequence of functions  $(x,y) = \{x_t, y_t\}_{t=1}^T$ such that $x_t(\sigma_{<t}) = e_t $ and $y_t(\sigma_{<t}) = 0$ for all $t \in [T]$. Note that our sequence $\{e_t\}_{t=1}^T \subseteq \Xcal$. Then, the sequential Rademacher complexity of the loss class is 
\begin{equation*}
    \begin{split}
        \text{Rad}_T(\Fcal) = \sup_{x, y} \,\expect \left[ \sup_{f \in \Fcal} \sum_{t=1}^T \sigma_t \norm{f(x_t(\sigma_{<t})) -y_t(\sigma_{<t})}^2\right] &\geq \expect \left[ \sup_{k \in \naturals} \sum_{t=1}^T \sigma_t \norm{f_k(e_t)}^2\right] \\
        &= \expect \left[ \sup_{k \in \naturals} \sum_{t=1}^T \sigma_t\, b_k[t] \right] \\
        &\geq \expect \left[ \sum_{t=1}^T \indicator\{\sigma_t =1\} \right]  = \frac{T}{2}.
\end{split}
\end{equation*}
Here, we use the fact that $f_k(e_t ) = b_k[t] \, e_k$ and $\prob[\sigma_t =1]= \frac{1}{2}$. As for the inequality $\sup_{k \in \naturals} \sum_{t=1}^T \sigma_t\, b_k[t]  \geq \sum_{t=1}^T \indicator\{\sigma_t =1\}$, note that for any sequence $\{\sigma_t\}_{t=1}^T$, there exists a  $k \in \naturals$ (possibly of the order $\sim 2^T$) such that $b_k[t]=1$ whenever $\sigma_t =1$ and $b_k[t] =0$ whenever $\sigma_t = -1$. 

\textbf{Proof of (ii)}. We now construct an online learner for $\Fcal$. Let $(x_1, y_1) \ldots, (x_T, y_T) \in \Xcal \times \Ycal$ denote the data stream. Since $y_t$ is an element of unit ball of $\Vcal$, we can write $y_t = \sum_{n=1}^{\infty}c_n(t) e_n$
such that $\sum_{n=1}^{\infty} c_n^2(t) \leq 1 $. For each $t \in [T]$, define a set of indices $S_t = \{n \in \naturals \, :\, |c_n(t)| \geq \frac{1}{2\sqrt{T}}\}$.  Since
\[1 \geq  \norm{y_t}^2 = \sum_{n=1}^{\infty} c_n^2(t) \geq \sum_{n \in S_t } c_n^2(t) \geq \sum_{n \in S_t} \frac{1}{4 T} = \frac{|S_t|}{4T},\]
we have $|S_t| \leq 4T$. Let $\text{sort}(S_i)$ denote the ordered list of size $4T$ that contains elements of $S_i$ in descending order.  If $S_i$ does not contain $4T$ indices, append $0$'s to the end of $\text{sort}(S_i)$. We let $\text{sort}(S_i)[j]$ denote the $j^{th}$ element of the ordered list $\text{sort}(S_i)$.

For each $i \in [T]$ and $j \in [4T]$, define an expert $E_i^j$ such that
\[E_i^j(x_t) = \begin{cases}
    0, \quad \quad  \quad \,\,t \leq i \\
    f_{k}(x_t), \quad t > i
\end{cases}, \quad \quad \text{ where } k = \text{sort}(S_i)[j]. \]
An online learner $\Acal$ for $\Fcal$ runs multiplicative weights algorithm using the set of experts $\mathcal{E} = \{E_i^j \, \mid i \in [T], j \in [4T]\}$. It is easy to see that $\norm{f_k(x)} \leq 1$ for all $x \in \Xcal$. Thus, for any $\hat{y}_t, y_t \in \Ycal$, we have $\norm{\hat{y}_t - y_t}^2 \leq 4$. Thus, for an appropriately chosen learning rate, the multiplicative weights algorithm guarantees (see Theorem 21.11 in \cite{ShwartzDavid}) that the regret of $\Acal$ satisfies
\[\expect\left[ \sum_{t=1}^T \norm{\Acal(x_t) -y_t}^2 \right] \leq \inf_{E \in \mathcal{E}} \sum_{t=1}^T \norm{E(x_t)-y_t}^2 +  4\sqrt{2T \ln(|\mathcal{E}|)}.\]
Note that $|\mathcal{E}| \leq 4T^2 $, which implies $4\sqrt{2T \ln(|\mathcal{E}|)} \leq 8 \sqrt{T \ln(2T)}$. We  now show that 
\begin{equation*}\label{sep:eq}
    \inf_{E \in \mathcal{E}} \sum_{t=1}^T \norm{E(x_t)-y_t}^2 \leq \inf_{f \in \Fcal} \sum_{t=1}^T \norm{f(x_t)-y_t}^2 + 2.
\end{equation*}
Together, these two inequalities imply that the expected regret of $\Acal$ is $\leq 2 + 8 \sqrt{T \ln(2T)}$. The rest of the proof is dedicated to proving the latter inequality. 

Let $f_{k^{\star}} \in \argmin_{f \in \Fcal} \sum_{t=1}^T \norm{f(x_t)-y_t}^2  $. Let $t^{\star} \in [T]$ be the first time point such that $k^{\star} \in S_{t^{\star}}$ and suppose it exists. Let $r^{\star} \in [4T]$ be such that $k^{\star} = \text{sort}(S_{t^{\star}})[r^{\star}]$. By definition of the experts, we have 
\[E_{t^{\star}}^{r^{\star}}(x_t) = f_{k^{\star}}(x_t) \quad \text{ for } t > t^{\star},\]
thus implying that $\sum_{t > t^{\star}} \norm{E_{t^{\star}}^{r^{\star}}(x_t)-y_t}^2 = \sum_{t > t^{\star}} \norm{f_{k^{\star}}(x_t)-y_t}^2$. 
Therefore, it suffices to show that
\begin{equation*}
    \sum_{t \leq t^{\star}} \norm{E_{t^{\star}}^{r^{\star}}(x_t)-y_t}^2 \leq \sum_{t \leq t^{\star}} \norm{f_{k^{\star}}(x_t)-y_t}^2 + 2.
\end{equation*}
As $E_{t^{\star}}^{r^{\star}}(x_{t}) = 0$ for all $t \leq t^{\star}$, proving the inequality above is equivalent to showing
\begin{equation*}
    \sum_{t \leq t^{\star}} \norm{y_t}^2 \leq \sum_{t \leq t^{\star}} \norm{f_{k^{\star}}(x_t)-y_t}^2 + 2.
\end{equation*}
Since $\norm{y_{t^{\star}}}^2 \leq 1$, we trivially have $\norm{ y_{t^{\star}}}^2 \leq \norm{f_{k^{\star}}(x_{t^{\star}})-y_{t^{\star}}}^2 + 1$.  Thus, by expanding the squared norm, the problem reduces to showing
\begin{equation*}
    \sum_{t < t^{\star}} \left( 2\inner{f_{k^{\star}}(x_t)}{y_t} - \norm{f_{k^{\star}}(x_t)}^2 \right) \leq  1.
\end{equation*}
We prove the inequality above by establishing 
\begin{equation*}
    2\inner{f_{k^{\star}}(x_t)}{y_t} - \norm{f_{k^{\star}}(x_t)}^2\leq \frac{1}{T} \quad \text{ for all } t < t^{\star}. 
\end{equation*}

Let $x_t = \sum_{n=1}^{\infty} \alpha_n(t) e_n$. We have $f_{k^{\star}}(x_t) = \sum_{n=1}^{\infty} b_{k^{\star}}[n] \inner{x_t}{e_n} e_{k^{\star}} = \left( \sum_{n=1}^{\infty} b_{k^{\star}}[n] \alpha_n(t)\right) e_{k^{\star}}$. Defining $a_{k^{\star}}(t) =  \left( \sum_{n=1}^{\infty} b_{k^{\star}}[n] \alpha_n(t)\right) $, we can write 
\[f_{k^{\star}}(x_t) = a_{k^{\star}}(t) e_{k^{\star}} \quad \text{ and } \quad  \norm{f_{k^{\star}}(x_t)} = |a_{k^{\star}}(t)|. \] 
So, it suffices to show that $2\,a_{k^{\star}}(t) \,c_{k^{\star}}(t) - |a_{k^{\star}}(t)|^2\leq \frac{1}{T}\text{ for all } t < t^{\star}.$
To prove this inequality, we consider the following two cases: 
\begin{itemize}
    \item[(I)] Suppose $|a_{k^{\star}}(t)| > 2 |c_{k^{\star}}(t)|$. Then, $2\,a_{k^{\star}}(t) \,c_{k^{\star}}(t) - |a_{k^{\star}}(t)|^2 < |a_{k^{\star}}(t)|^2 - |a_{k^{\star}}(t)|^2 = 0$.
    \item[(II)] Suppose $|a_{k^{\star}}(t)| \leq 2 |c_{k^{\star}}(t)|$. Then, $2\,a_{k^{\star}}(t) \,c_{k^{\star}}(t) - |a_{k^{\star}}(t)|^2 \leq 4\, |c_{k^{\star}}(t)|^2 < 4\, \left( \frac{1}{2\sqrt{T}}\right)^2 = \frac{1}{T}$ because $k^{\star} \notin S_t$ for all $t < t^{\star}$.
\end{itemize} 
In either case, $2\,a_{k^{\star}}(t) \,c_{k^{\star}}(t) - |a_{k^{\star}}(t)|^2\leq \frac{1}{T} \quad \text{for all } t < t^{\star}$. 

Finally, suppose that such a  $t^{\star}$ does not exist. Then, our analysis for the case $t \leq t^{\star}$ above shows that the expert $E_T^1$ that predicts $E_T^1(x_t) = 0$ for all $t \leq T$ satisfies
$
  \sum_{t=1}^T \norm{E_T^1(x_t)-y_t}^2 \leq  \sum_{t=1}^T \norm{f_{k^{\star}}(x_t)-y_t}^2 + 2.
$
 \end{proof} 

\subsection{ Batch Learnability without Uniform Convergence}
Although we state Theorem \ref{UCneqLearn}  in the online setting, an analogous result also holds in the batch setting. To establish the batch analog of Theorem \ref{UCneqLearn}, 
consider $f_k$ defined in \eqref{natfunc} and define a class $\Fcal = \{f_k \mid k \in \naturals\} \cup \{f_0\}$ where $f_0 =0$. This is the same class considered in the proof of Theorem \ref{UCneqLearn}.  Recall that in our proof of Theorem \ref{UCneqLearn} (i), we choose a sequence of labeled examples $\{e_t, 0\}_{t=1}^{T}$ that is independent of the sequence of Rademacher random variables $\{\sigma_t\}_{t=1}^T$. Thus, our proof shows that the i.i.d. version of the Rademacher complexity of $\Fcal$, where the labeled samples are independent of Rademacher variables, is also lower bounded by $\frac{T}{2}$. This implies that the class $\Fcal$ does not satisfy the uniform law of large numbers in the i.i.d. setting. However, using the standard online-to-batch conversion techniques, we can convert our online learner for $\Fcal$ to a batch learner for $\Fcal$ \citep{cesa2004generalization}. This shows a separation between uniform convergence and batch learnability of bounded linear operators. 

\section{Discussion and Open Questions}
 In this work, we study the online learnability of bounded linear operators between two infinite-dimensional Hilbert spaces.  In Theorems \ref{Sp} and \ref{Sp_lower}, we showed that 
 \[ c^2\, T^{1-\frac{1}{p}}\, \leq \inf_{\Acal} \,\texttt{R}_{\Acal}(T, 
 \Fcal_p) \leq 6c^2\, T^{\,\max \left\{\frac{1}{2}, 1-\frac{1}{p} \right\}},\]
 for every $p \in [1, \infty]$, where $\Fcal_{p} := \{f \in S_{p}(\Vcal, \Wcal) \, :\, \norm{f}_p \leq c\}$. Note that the upperbound and lowerbound match $p \geq 2$. However, for $p \in [1,2)$, the upperbound saturates at $\sqrt{T}$, while the lower bound gets progressively worse as $p$ decreases. Given this gap, we leave it open to resolve the following question.
 \begin{center}
     What is $\inf_{\Acal} \,\texttt{R}_{\Acal}(T, 
 \Fcal_p) $ for $p \in [1, 2)$? 
 \end{center}
\noindent We conjecture that lowerbound is loose  for $p \in [1,2)$, and one can obtain 
faster rates using some adaptation of the seminal Vovk-Azoury-Warmuth forecaster \citep{vovk2001competitive, azoury2001relative}.

Section \ref{sec:sep} shows a separation between sequential uniform convergence and online learnability for bounded linear operators. The separation is exhibited by a class that lies in $S_{1}(\Vcal, \Wcal)$, but is \textit{not} uniformly bounded. In this work, we established that there is no separation between online learnability and sequential uniform convergence for any subset of $S_p(\Vcal, \Wcal)$ with uniformly bounded $p$-Schatten norm for $p \in [1, \infty)$. However, it is unknown whether this is also true for $S_{\infty}(\Vcal, \Wcal)$. This raises the following natural question.
 \begin{center}
        Is $\text{Rad}_T(\Fcal) =o(T)$ if and only if $\inf_{\Acal} \texttt{R}_{\Acal}(T, \Fcal) =o(T)$ for every $\Fcal \subseteq \{f \in S_{\infty}(\Vcal, \Wcal) \mid \norm{f}_{\infty} \leq c\}$? 
 \end{center}

Finally, in this work, we showed that a uniform bound on the $p$-Schatten norm for any $p \in [1, \infty) $ is sufficient for online learnability. However, the example in Theorem \ref{UCneqLearn} shows that a uniform upper bound on the norm is not necessary for online learnability. Thus, it is an interesting future direction to fully characterize the landscape of learnability for bounded linear operators. In addition, it is also of interest to extend these results to nonlinear operators.

\acks{We acknowledge the assistance of Judy McDonald in locating a misdelivered package containing
\cite{ShwartzDavid}. Without the benefit of ideas in \cite{ShwartzDavid}, this paper would have never been written. AT acknowledges the support of NSF via grant IIS-2007055.  VR acknowledges the support of the NSF Graduate
Research Fellowship.}

\bibliography{references}

\appendix


\section{Upperbound Proofs for Online Setting}\label{appdx:tech_lem}

Our proof of Theorem \ref{Sp} also relies on the following technical Lemma.
\begin{lemma}\label{tr}
  Let $v \in \Vcal$, $w \in \Wcal$, and $f \in \Lcal(\Vcal, \Wcal)$. Then, we have $\inner{f(v)}{w} = \trace(f \circ (v \otimes w))$.
 \end{lemma}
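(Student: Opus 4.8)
The plan is to unwind the definition of the tensor-product operator $v \otimes w$ and then compute the trace directly against an orthonormal basis of $\Wcal$.

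First I would fix the tensor-product convention (the one used implicitly in the singular value decomposition $f = \sum_n s_n(f)\, \varphi_n \otimes \phi_n$ above): for $v \in \Vcal$ and $w \in \Wcal$, the operator $v \otimes w : \Wcal \to \Vcal$ acts by $(v \otimes w)(u) = \inner{u}{w}_{\Wcal}\, v$. This is the only ordering that makes $f \circ (v \otimes w)$ well defined, since $f$ has domain $\Vcal$ and we need the composition to map $\Wcal$ into $\Wcal$ for its trace to be meaningful. Composing with $f$, for every $u \in \Wcal$ we get $(f \circ (v \otimes w))(u) = f\big(\inner{u}{w}\, v\big) = \inner{u}{w}\, f(v)$, so $f \circ (v \otimes w) = f(v) \otimes w$ is a rank-one operator from $\Wcal$ to $\Wcal$. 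In particular it is trace class, so its trace is well defined and independent of the chosen orthonormal basis.

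Next I would evaluate the trace using the orthonormal basis $\{\psi_n\}_{n=1}^{\infty}$ of $\Wcal$:
\[
\trace\big(f \circ (v \otimes w)\big) = \sum_{n=1}^{\infty} \inner{(f(v) \otimes w)(\psi_n)}{\psi_n} = \sum_{n=1}^{\infty} \inner{\psi_n}{w}\, \inner{f(v)}{\psi_n}.
\]
Since $\Wcal$ is a real Hilbert space, $\inner{\psi_n}{w} = \inner{w}{\psi_n}$, and Parseval's identity applied to the expansions $f(v) = \sum_n \inner{f(v)}{\psi_n}\psi_n$ and $w = \sum_n \inner{w}{\psi_n}\psi_n$ gives $\sum_n \inner{f(v)}{\psi_n}\inner{w}{\psi_n} = \inner{f(v)}{w}_{\Wcal}$, which is exactly the claimed identity.

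There is no serious obstacle here; the only points deserving a line of care are (a) pinning down the correct ordering convention for $v \otimes w$ so that $f \circ (v \otimes w)$ is a genuine operator on $\Wcal$, and (b) observing that $f(v) \otimes w$ is rank one and hence trace class, so the series defining its trace converges absolutely and the basis-wise computation above is legitimate (and agrees with the trace computed in any other basis).
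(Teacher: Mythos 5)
Your proof is correct and follows essentially the same route as the paper's: both expand the trace against the orthonormal basis $\{\psi_n\}$ of $\Wcal$, use the convention $(v \otimes w)(u) = \inner{u}{w}\,v$ so that $f \circ (v \otimes w)(\psi_n) = \inner{\psi_n}{w}\,f(v)$, and resum to obtain $\inner{f(v)}{w}$. The extra remarks you add about fixing the tensor-product ordering and the rank-one/trace-class justification are sensible bits of care but do not change the argument.
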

\begin{proof}(of Lemma \ref{tr})
   Let $\{\psi_{n}\}_{n=1}^{\infty}$ be an orthonormal basis of $\Wcal$ and $w = \sum_{n=1}^{\infty} \alpha_n \psi_n$ for an $\ell_2$ summable sequence $\{\alpha_n\}_{n \in \naturals}$. Then, by definition of the trace operator, we have
\begin{equation*}
        \trace(f \circ (v \otimes w)) = \sum_{n=1}^{\infty} \inner{f \circ (v \otimes w)(\psi_n)}{\psi_n} = \sum_{n=1}^{\infty} \inner{\alpha_n \,f(v)}{\psi_n} = \inner{f(v)}{\sum_{n=1}^{\infty}\alpha_n \psi_n} = \inner{f(v)}{w},
\end{equation*}
which completes our proof. 
\end{proof}

\subsection{Proof of Lemma \ref{radsum}}
  Let $F = \sum_{t=1}^{T}\sigma_t\,  v_t(\sigma_{< t}) \otimes w_t(\sigma_{< t}) $. 
Since
\[\text{rank}\left(F\right) \leq \sum_{t=1}^T\text{rank}\left(\sigma_t\,  v_t(\sigma_{< t}) \otimes w_t(\sigma_{< t}) \right) \leq  T,  \] 
$F$ can have at most $T$ non-zero singular values. Let  $\{s_t\}_{t=1}^T$ be the singular values of the operator $F$, possibly with multiplicities. Then, for $q \in [1,2)$, we have
\begin{equation*}
    \begin{split}
        \norm{F}_q = \left(\sum_{t=1}^T s_t^{q} \right)^{\frac{1}{q}} \leq  \left(\left(\sum_{t=1}^T (s_t^q)^{\frac{2}{q}} \right)^{\frac{q}{2}}\, \left(\sum_{t=1}^T 1^{\frac{2}{2-q}} \right)^{\frac{2-q}{2}}\right)^{\frac{1}{q}} = \left( \sum_{t=1}^T s_t^2\right)^{\frac{1}{2}}\, T^{\frac{1}{q}-\frac{1}{2}} = \norm{F}_2 \,T^{\frac{1}{q}-\frac{1}{2}},
    \end{split}
\end{equation*}
where the inequality is due to H\"{o}lder. As for $q \geq 2$, we trivially have $\norm{F}_q \leq \norm{F}_2$. In either case, we obtain
\[\norm{F}_q \leq \max \left\{T^{\frac{1}{q}-\frac{1}{2}}, 1 \right\}\, \, \norm{F}_2. \]
Hence, to prove Lemma \ref{radsum}, it suffices to show that 
\[\expect[\, \norm{F}_2] \leq c_1\, c_2\,T^{\frac{1}{2}}.\]
Recall that by definition of the $2$-Schatten norm, we have $\norm{F}_2 = \sqrt{\trace\left( F^{\star}F\right)}$. Using linearity of trace and Jensen's inequality gives  $\expect \left[ \sqrt{\trace\left( F^{\star}F\right)}\right] \leq \sqrt{\trace\left( \expect\left[ 
 F^{\star}F\right]\right)}$. Then,
\begin{equation*}
    \begin{split}
    \expect\left[ 
 F^{\star}F\right] &= \expect \left[\left(\sum_{t=1}^{T}\sigma_t\,  w_t(\sigma_{< t}) \otimes v_t(\sigma_{< t}) \right)\left(\sum_{t=1}^{T}\sigma_t\,  v_t(\sigma_{< t}) \otimes w_t(\sigma_{< t}) \right) \right] \\
 &= \expect \left[\sum_{t, r}\sigma_t\,\sigma_r \inner{v_t(\sigma_{< t}) }{v_r(\sigma_{< r}) }  w_t(\sigma_{< t}) \otimes w_r(\sigma_{< r}) \right] \\
 &= \expect \left[\sum_{t=1}^T \norm{v_t(\sigma_{< t})}^2  w_t(\sigma_{< t}) \otimes w_t(\sigma_{< t}) \right] + \expect \left[\sum_{t \neq r} \sigma_t \sigma_r \inner{v_t(\sigma_{< t}) }{v_r(\sigma_{< r}) }  w_t(\sigma_{< t}) \otimes w_r(\sigma_{< r}) \right] \\
 &= \expect \left[\sum_{t=1}^T \norm{v_t(\sigma_{< t})}^2  w_t(\sigma_{< t}) \otimes w_t(\sigma_{< t}) \right].
    \end{split}
\end{equation*}
To see why the second term above is $0$, consider the case $t < r$. We have 
\begin{equation*}
    \begin{split}
       \expect \left[\sigma_t \sigma_r \inner{v_t(\sigma_{< t}) }{v_r(\sigma_{< r}) }  w_t(\sigma_{< t}) \otimes w_r(\sigma_{< r}) \right] &= \expect\left[\expect \left[\sigma_t \sigma_r \inner{v_t(\sigma_{< t}) }{v_r(\sigma_{< r}) }  w_t(\sigma_{< t}) \otimes w_r(\sigma_{< r}) \mid \sigma_{< r}\right] \right] \\
       &= \expect\left[\sigma_t \inner{v_t(\sigma_{< t}) }{v_r(\sigma_{< r}) }  w_t(\sigma_{< t}) \otimes w_r(\sigma_{< r})\, \expect \left[ \sigma_r\mid \sigma_{< r}\right] \right] \\
       &=0.
    \end{split}
\end{equation*}
The last equality follows because $\sigma_r$ is independent of $\sigma_{<r}$ and thus $\expect \left[ \sigma_r\mid \sigma_{< r}\right]  = \expect[\sigma_r] =0 $. The case where $t > r$ is symmetric. Putting everything together, we have
\begin{equation*}
    \begin{split}
       \trace\left( \expect[F^{\star}F]\right) &= \trace \left(\expect \left[\sum_{t=1}^T \norm{v_t(\sigma_{< t})}^2  w_t(\sigma_{< t}) \otimes w_t(\sigma_{< t}) \right]  \right)\\
       &=\expect \left[\sum_{t=1}^T \norm{v_t(\sigma_{< t})}^2  \trace \left(w_t(\sigma_{< t}) \otimes w_t(\sigma_{< t})\right) \right]  \\
       &= \expect \left[\sum_{t=1}^T \norm{v_t(\sigma_{< t})}^2 \norm{w_t(\sigma_{< t})}^2  \right] \\
       &\leq \sum_{t=1}^T c_1^2c_2^2 = (c_1c_2)^2\,T,
    \end{split}
\end{equation*}
which implies that $\expect[\, \norm{F}_2]\leq    \sqrt{\trace\left(\expect \left[ F^{\star}F\right]\right)} \leq \sqrt{(c_1c_2)^2 T} = c_1\, c_2\,T^{\frac{1}{2}}$. This completes our proof.

\subsection{Proof of Theorem \ref{Sp}}

    Define the normalized loss class $ \{(u, v) \mapsto \frac{1}{4c^2}\norm{f(u)-v}^2 \, : f \in \Fcal_p\}$ such that every function in this class maps to $[0,1]$. Applying
  \cite[Theorem 2]{rakhlin2015sequential} to this normalized loss class, we obtain that the expected regret of $\Acal$ is $\leq 8c^2 \, \text{Rad}_T(\overline{\Fcal}_p)$, where $\overline{\Fcal}_p = \{\frac{1}{4c^2} f \mid f \in \Fcal_p\}$ is the normalized operator class. Since $\text{Rad}_T(\overline{\Fcal}_p) = \frac{1}{4c^2}\, \text{Rad}_T(\Fcal_p)$, the expected regret of $\Acal $ is $\leq 2\, \text{Rad}_T(\Fcal_p)$. This completes the proof of the first inequality. We now focus on proving the second inequality here. By definition, we have
  \begin{equation*}
    \begin{split}
            &\text{Rad}_{T}( \Fcal_p) = \sup_{x,y} \,\expect\left[ \sup_{f \in \Fcal_p} \,\sum_{t=1}^T \sigma_t \norm{f(x_t(\sigma_{< t}))- y_t(\sigma_{<t})}^2\right] \\
            &\leq \sup_{x,y} \left(\,\expect\left[ \sup_{f \in \Fcal_p} \,\sum_{t=1}^T \sigma_t \norm{f(x_t(\sigma_{< t}))}^2\right] + 2\expect\left[ \sup_{f \in \Fcal_p} \,\sum_{t=1}^T -\sigma_t \inner{f(x_t(\sigma_{< t}))}{y_t(\sigma_{<t})}\right] \right. \\
            &\quad \left. + \expect\left[ \,\sum_{t=1}^T \sigma_t \norm{y_t(\sigma_{< t}))}^2\right]\right)\\
            &= \sup_{x,y} \left(\,\expect\left[ \sup_{f \in \Fcal_p} \,\sum_{t=1}^T \sigma_t \norm{f(x_t(\sigma_{< t}))}^2\right] + 2\expect\left[ \sup_{f \in \Fcal_p} \,\sum_{t=1}^T \sigma_t \inner{f(x_t(\sigma_{< t}))}{y_t(\sigma_{<t})}\right] \right).
    \end{split}
    \end{equation*}

To handle the second term above, recall that Lemma \ref{tr} implies $\inner{f(x_t(\sigma_{< t}))}{y_t(\sigma_{<t})} = \trace(f \circ \, (x_t(\sigma_{< t}) \otimes y_t(\sigma_{< t})))$. Using the linearity of the trace operator, we obtain 
\begin{equation*}
    \begin{split}
        \sum_{t=1}^T \sigma_t \inner{f(x_t(\sigma_{< t}))}{y_t(\sigma_{<t})}  = \trace\left( f \circ \sum_{t=1}^{T}\sigma_t\,  x_t(\sigma_{< t}) \otimes y_t(\sigma_{< t})\right) \leq \norm{f}_p \, \, \norm{\sum_{t=1}^{T}\sigma_t\,  x_t(\sigma_{< t}) \otimes y_t(\sigma_{< t})}_q,
    \end{split}
\end{equation*}
where $q := 1-\frac{1}{p}$ is the H\"{o}lder conjugate of $p$\, \cite[Page 41]{reed1975ii}.
This implies the bound
\begin{equation*}
    \begin{split}
      \expect\left[ \sup_{f \in \Fcal_p} \,\sum_{t=1}^T \sigma_t \inner{f(x_t(\sigma_{< t}))}{y_t(\sigma_{<t})}\right] &\leq \expect \left[\sup_{f \in \Fcal_p}\, \norm{f}_p \, \, \norm{\sum_{t=1}^{T}\sigma_t\,  x_t(\sigma_{< t}) \otimes y_t(\sigma_{< t})}_q \right]  \\
      &\leq c\,  \expect \left[\norm{\sum_{t=1}^{T}\sigma_t\,  x_t(\sigma_{< t}) \otimes y_t(\sigma_{< t})}_q \right], 
    \end{split}
\end{equation*}
where the last inequality follows from the definition of $\Fcal_p$.

To handle the first term in the bound of $\text{Rad}_T(\Fcal_p)$ above,  note that
 \[\norm{f(x_t(\sigma_{< t}))}^2 = \inner{f(x_t(\sigma_{< t}))}{f(x_t(\sigma_{< t}))} = \inner{f^{\star}f(x_t(\sigma_{< t}))}{x_t(\sigma_{< t})} =\trace(f^{\star}f \circ (x_t(\sigma_{< t}) \otimes x_t(\sigma_{< t}))),\]
 where the final equality follows from Lemma \ref{tr}. Using linearity of trace, and the generalized H\"{o}lder's inequality for Schatten norms \cite[Page 41]{reed1975ii}, we obtain
\begin{equation*}
    \begin{split}
       \expect\left[ \sup_{f \in \Fcal_p} \,\sum_{t=1}^T \sigma_t \norm{f(x_t(\sigma_{< t}))}^2\right] &\leq \expect \left[ \sup_{f \in \Fcal_p} \, \norm{f^{\star}f}_p \norm{\sum_{t=1}^{T}\sigma_t\,  x_t(\sigma_{< t}) \otimes x_t(\sigma_{< t})}_q \right] \\
       &\leq c^2  \expect \left[\norm{\sum_{t=1}^{T}\sigma_t\,  x_t(\sigma_{< t}) \otimes x_t(\sigma_{< t})}_q \right], 
    \end{split}
\end{equation*}
where the last inequality uses the fact that $\norm{f^{\star}f}_p \leq \norm{f}_p^2$.
Combining everything, we obtain
\begin{equation*}
\begin{split}
        \text{Rad}_{T}( \Fcal_p) \leq c^2 \expect \left[\norm{\sum_{t=1}^{T}\sigma_t\,  x_t(\sigma_{< t}) \otimes x_t(\sigma_{< t})}_q \right]+ 2c\, \expect \left[\norm{\sum_{t=1}^{T}\sigma_t\,  x_t(\sigma_{< t}) \otimes y_t(\sigma_{< t})}_q \right] \leq  3c^2 \, T^{\max\left\{\frac{1}{2}, \frac{1}{q}\right\}},
\end{split}
\end{equation*}
where the final inequality follows from using Lemma \ref{radsum} twice. Recalling that $\frac{1}{q} = 1- \frac{1}{p}$ completes our proof of second inequality.

\section{Proof of Theorem \ref{Sp_lower_PAC}}\label{appdx:PAC}
\subsection{Proof of lowerbound of $\frac{c^2}{12}\, n^{-\frac{1}{p-1}}$.}\label{apppdx:PAC1}
\begin{proof}
     Fix $n, m \in \naturals$. Let $\Dcal$ be an arbitrary joint distribution on $\Xcal \times \Ycal$, and $U$ denote the uniform distribution on $\{e_1, \ldots, e_{mn}\}$. For each $\sigma \in \{-1,1\}^{mn}$, define 
$h_{\sigma} =\sum_{i=1}^{mn} c\,\sigma_i\,  \psi_i \otimes e_i$. Note that $h_{\sigma} \notin \Fcal_p$ for large $n$. The minimax expected excess risk of $\Fcal$ is
\begin{equation*}
    \begin{split}
     \Ecal_{n}(\Fcal) &= \inf_{\hat{f}_n} \sup_{\Dcal }  \expect_{S \sim \Dcal^n}\left[ \expect_{(x,y)\sim \Dcal} \left[ \norm{\hat{f}_n(x)-y}^2\right] - \inf_{f \in \Fcal_p} \expect_{(x,y)\sim \Dcal} \left[ \norm{f(x)-y}^2\right]\right] \\
    &\geq \inf_{\hat{f}_n} \expect_{\sigma \sim \{\pm 1\}^{mn}} \left[ \expect_{S \sim (U \times h_{\sigma})^n}\left[ \expect_{x\sim U} \left[ \norm{\hat{f}_n(x)-h_{\sigma}(x)}^2\right] - \inf_{f \in \Fcal_p} \expect_{x\sim U} \left[ \norm{f(x)-h_{\sigma}(x)}^2\right]\right]\right],
    \end{split}
\end{equation*}
where the first inequality follows upon replacing supremum over $\Dcal, \sigma$ with $U$ and expectation over $\sigma$ respectively.  Let $S_x \in \Xcal^n$ denote the instances from labeled samples $S \in (\Xcal \times \Ycal)^n$. 
We first lower bound the expected risk of the learner, and then upper bound the expected risk of the optimal function in $\Fcal_p$.
Exchanging the order of the first two expectations, the lower bound of the expected risk of the learner is
\begin{equation*}
    \begin{split}
  &\inf_{\hat{f}_n} \expect_{S_x \sim U^n}\left[\expect_{\sigma \sim \{\pm 1\}^{mn}}  \left[\expect_{x \sim U} \left[ \norm{\hat{f}_n(x)-h_{\sigma}(x)}^2\right] \right] \right]\\
    &=\inf_{\hat{f}_n} \expect_{S_x \sim U^n}\left[\expect_{\sigma \sim \{\pm 1\}^{mn}}  \left[ \frac{1}{mn} \sum_{i=1}^{mn}\norm{\hat{f}_n(e_i)-h_{\sigma}(e_i)}^2\right]  \right]\\
    &\geq \inf_{\hat{f}_n} \expect_{N \sim \text{Unif}(\{1,, \ldots, mn\})^{n}}\left[\expect_{\sigma \sim \{\pm 1\}^{mn}}  \left[ \frac{1}{mn} \sum_{i \notin N}\norm{\hat{f}_n(e_i)-c\,\sigma_i \psi_i}^2\right]  \right]\\
    &\geq \inf_{\hat{f}_n} \expect_{N \sim \text{Unif}(\{1,, \ldots, mn\})^{n}}\left[ \frac{1}{mn} \sum_{i \notin N}\left(\expect_{\sigma \sim \{\pm 1\}^{mn}}\left[ \norm{\hat{f}_n(e_i)-c\,\sigma_i \psi_i}\right] \right)^2 \right].
    \end{split}
\end{equation*}
In order to get the second to the last inequality, we reinterpret sampling $x$ uniformly from $\{e_1, \ldots, e_{mn}\}$ as sampling index $i$ uniformly from $\{1, \ldots, mn\}$ and drawing $e_i$. The final inequality follows upon exchanging the sum and expectation and applying Jensen's. Note that, whenever $i \notin N$, we have 
\begin{equation*}
    \begin{split}
      &\expect_{\sigma \sim \{\pm 1\}^{mn}}\left[ \norm{\hat{f}_n(e_i)-c\,\sigma_i \psi_i}\right] = \expect \left[ \expect_{\sigma_i} \left[\norm{\hat{f}_n(e_i)-c\,\sigma_i \psi_i} \right] \mid \hat{f}_n \right]  \\
      &= \expect \left[ \frac{1}{2} \left( \norm{\hat{f}_n(e_i)-c\,\psi_i} + \norm{\hat{f}_n(e_i)+c\,\psi_i}\right) \mid \hat{f}_n \right] \\
      &\geq \frac{1}{2} \norm{c\psi_i + c\psi_i}\\
      &= c,
    \end{split}
\end{equation*}
where we use the fact $\hat{f}_n$ is independent of $\sigma_i$ for all $i \notin N$ and triangle inequality. Thus, combining everything, our lower bound is
\[\geq \inf_{\hat{f}_n} \expect_{N \sim \text{Unif}(\{1,, \ldots, mn\})^{n}}\left[ \frac{1}{mn} \sum_{i \notin N}c^2  \right]  = \frac{c^2}{mn} \sum_{i=1}^{mn} \mathbb{P}(i \notin N) = c^2 \left(1-\frac{1}{mn} \right)^n. \] 
For the last equality, we use the fact that the probability of $i$ not appearing in the set $N$ obtained by $n$ random uniform draw from $\{1,2, \ldots, mn\}$ with replacement is $\left(1-\frac{1}{mn} \right)^n$.

Next, we upperbound optimal expected risk amongst functions in $\Fcal_p$. Consider \[f_{\sigma,p} = 
 \sum_{j=1}^{mn}  \frac{c\, \sigma_j \,}{(mn)^{1/p}}\,  \psi_j \otimes e_j.\] 
 Clearly, $\norm{f_{\sigma,p}}_p \leq c$ for all $p \in [1, \infty]$ and thus $f_{\sigma,p} \in \Fcal_p$. Therefore, we can write
\begin{equation*}
    \begin{split}
        \inf_{f \in \Fcal_p} \expect_{x\sim U} \left[ \norm{f(x)-h_{\sigma}(x)}^2\right]  &\leq \expect_{x\sim U} \left[ \norm{f_{\sigma,p}(x)-h_{\sigma}(x)}^2\right] \\
        &= \frac{1}{mn}\sum_{i=1}^{mn} 
        \norm{f_{\sigma,p}(e_i) -h_{\sigma}(e_i) }^2\\
        &= \frac{1}{mn}\sum_{i=1}^{mn} \norm{\frac{c \, \sigma_i}{(mn)^{1/p}}\,  \psi_i - c\,\sigma_i \psi_i}^2\\
        &= \frac{1}{mn} \sum_{i=1}^{mn}c^2\, \left(1- \frac{1}{(mn)^{1/p}} \right)^2\\
       &= c^2 \left( 1- \frac{1}{(mn)^{1/p}} \right)^2 \leq c^2 \left( 1-\frac{1}{(mn)^{1/p}}\right).    
       \end{split}
\end{equation*}
Thus, putting everything together, the minimax expected excess risk is
\begin{equation*}
    \begin{split}
       &\geq c^2 \left(1-\frac{1}{mn} \right)^n - c^2 \left( 1-\frac{1}{(mn)^{1/p}}\right) \\
       &\geq c^2 \left(1-\frac{1}{2m} \right)^2 - c^2 \left( 1-\frac{1}{(mn)^{1/p}}\right) \quad \quad (\text{ for } n\geq 2)\\
       &\geq c^2 \left(\frac{1}{(mn)^{\frac{1}{p}}}-   \frac{1}{2m}\right). \\
    \end{split}
\end{equation*}
Next, pick $m =  \,\ceil{2n^{\frac{1}{p-1}}}$. Then, we have that $2n^{\frac{1}{p-1}} \leq m \leq 3n^{\frac{1}{p-1}}$. So, the expression above is further lower bounded by
\[c^2 \left(\frac{1}{(3n^{\frac{1}{p-1}}\, n)^{\frac{1}{p}}}-   \frac{1}{2\, 2n^{\frac{1}{p-1}}}\right) \geq c^2 \left( \frac{1}{3n^{\frac{1}{p-1}}} - \frac{1}{4n^{\frac{1}{p-1}}}\right) = \frac{c^2}{12 n^{\frac{1}{p-1}}}.\]
This completes our proof. 
\end{proof}

\subsection{Proof of lowerbound of $\frac{c^2}{8}\, n^{-\frac{2}{p}}$.}
Our proof here follows similar arguments as the proof in \ref{apppdx:PAC1}. However, the lowerbound in this section is derived in the realizable setting. 
\begin{proof}
     Fix $n, m \in \naturals$. Let $\Dcal$ be an arbitrary joint distribution on $\Xcal \times \Ycal$, and let $U$ denote the uniform distribution on $\{e_1, \ldots, e_{mn}\}$. For each $\sigma \in \{-1,1\}^{mn}$, define 
$f_{\sigma,p} =\sum_{i=1}^{mn} \frac{c}{(mn)^{1/p}}\,\sigma_i\,  \psi_i \otimes e_i$. Note that $f_{\sigma,p} \in \Fcal_p$ for all $p \geq 1$. The minimax expected excess risk of $\Fcal$ is
\begin{equation*}
    \begin{split}
     \Ecal_n(\Fcal)&=\inf_{\hat{f}_n} \sup_{\Dcal }  \expect_{S \sim \Dcal^n}\left[ \expect_{(x,y)\sim \Dcal} \left[ \norm{\hat{f}_n(x)-y}^2\right] - \inf_{f \in \Fcal_p} \expect_{(x,y)\sim \Dcal} \left[ \norm{f(x)-y}^2\right]\right] \\
    &\geq \inf_{\hat{f}_n} \expect_{\sigma \sim \{\pm 1\}^{mn}} \left[ \expect_{S \sim (U \times f_{\sigma,p})^n}\left[ \expect_{x\sim U} \left[ \norm{\hat{f}_n(x)-f_{\sigma,p}(x)}^2\right] - \inf_{f \in \Fcal_p} \expect_{x\sim U} \left[ \norm{f(x)-f_{\sigma,p}(x)}^2\right]\right]\right] \\
    &\geq \inf_{\hat{f}_n} \expect_{\sigma \sim \{\pm 1\}^{mn}} \left[ \expect_{S \sim (U \times f_{\sigma,p})^n}\left[ \expect_{x\sim U} \left[ \norm{\hat{f}_n(x)-f_{\sigma,p}(x)}^2\right] \right]\right]
    \end{split}
\end{equation*}
where the first inequality follows upon replacing supremum over $\Dcal, \sigma$ with $U$ and expectation over $\sigma$. The second inequality follows because $\inf_{f \in \Fcal_p} \expect_{x\sim U} \left[ \norm{f(x)-f_{\sigma,p}(x)}^2\right] \leq \expect_{x\sim U} \left[ \norm{f_{\sigma,p}(x)-f_{\sigma,p}(x)}^2\right] =0$ as $f_{\sigma,p} \in \Fcal_p$.

Let $S_x$ denote the instances from labeled samples $S$. 
We first lower bound the expected risk of the learner $\hat{f}_n$. Following the same calculation as in the first part of the proof, the lower bound of the expected risk of the learner is
\begin{equation*}
    \begin{split}
  &\inf_{\hat{f}_n} \expect_{S_x \sim U^n}\left[\expect_{\sigma \sim \{\pm 1\}^{mn}}  \left[\expect_{x \sim U} \left[ \norm{\hat{f}_n(x)-f_{\sigma,p}(x)}^2\right] \right] \right]\\
    &=\inf_{\hat{f}_n} \expect_{S_x \sim U^n}\left[\expect_{\sigma \sim \{\pm 1\}^{mn}}  \left[ \frac{1}{mn} \sum_{i=1}^{mn}\norm{\hat{f}_n(e_i)-f_{\sigma,p}(e_i)}^2\right]  \right]\\
    &\geq \inf_{\hat{f}_n} \expect_{N \sim \text{Unif}(\{1,, \ldots, mn\})^{n}}\left[\expect_{\sigma \sim \{\pm 1\}^{mn}}  \left[ \frac{1}{mn} \sum_{i \notin N}\norm{\hat{f}_n(e_i)-\,\frac{c\,\sigma_i }{(mn)^{1/p}}\psi_i}^2\right]  \right]\\
    &\geq \inf_{\hat{f}_n} \expect_{N \sim \text{Unif}(\{1,, \ldots, mn\})^{n}}\left[ \frac{1}{mn} \sum_{i \notin N}\left(\expect_{\sigma \sim \{\pm 1\}^{mn}}\left[ \norm{\hat{f}_n(e_i)-\frac{c\, \sigma_i }{(mn)^{1/p}}\psi_i }\right] \right)^2 \right].
    \end{split}
\end{equation*}
To get the second to the last inequality, we reinterpret sampling $x$ uniformly from $\{e_1, \ldots, e_{mn}\}$ as sampling index $i$ uniformly from $\{1, \ldots, mn\}$ and drawing $e_i$. The final inequality follows upon exchanging the sum and expectation and applying Jensen's. Note that, whenever $i \notin N$, we have 
\begin{equation*}
    \begin{split}
      &\expect_{\sigma \sim \{\pm 1\}^{mn}}\left[ \norm{\hat{f}_n(e_i)-c\,\sigma_i \psi_i}\right] = \expect \left[ \expect_{\sigma_i} \left[\norm{\hat{f}_n(e_i)-\frac{c\,\sigma_i}{(mn)^{1/p}} \psi_i} \right] \mid \hat{f}_n \right]  \\
      &= \expect \left[ \frac{1}{2} \left( \norm{\hat{f}_n(e_i)- \frac{c}{(mn)^{1/p}}\,\psi_i} + \norm{\hat{f}_n(e_i)+\frac{c}{(mn)^{1/p}}}\right) \mid \hat{f}_n \right] \\
      &\geq \frac{c}{(mn)^{1/p}}
    \end{split}
\end{equation*}
where we use the fact $\hat{f}_n$ is independent of $\sigma_i$ as $i \notin N$ and triangle inequality. Thus, combining everything, our lower bound is
\[\geq \inf_{\hat{f}_n} \expect_{N \sim \text{Unif}(\{1,, \ldots, mn\})^{n}}\left[ \frac{1}{mn} \sum_{i \notin N}\frac{c^2}{(mn)^{2/p}} \right]  =  \frac{c^2}{(mn)^{2/p}} \left(1-\frac{1}{mn} \right)^n. \] 
For the last equality, we use the fact that the probability of $i$ not appearing in the set $N$ obtained by $n$ random uniform draw from $\{1,2, \ldots, mn\}$ with replacement is $\left(1-\frac{1}{mn} \right)^n$. Picking $m=2$ and using the fact that $\left(1-\frac{1}{2n} \right)^n \geq 1-1/2 = 1/2$, we obtain the lowerbound of $\frac{c^2}{8} n^{-\frac{2}{p}}$.

\end{proof}

\end{document}